\definecolor{commentcolor}{rgb}{0,0,0}
\newtheorem{theorem}{Theorem}
\newtheorem{proposition}{Proposition}
\def\A{\mathbf{A}}
\def\R{\mathbf{R}}
\def\Q{\mathbf{Q}}
\def\x{\mathbf{x}}
\def\Q{\mathbf{Q}}
\DeclareMathOperator*{\trace}{trace}
\DeclareMathOperator*{\rank}{rank}
\begin{document}
%
\title{Affine Correspondences between Multi-Camera Systems for Relative Pose Estimation}
%
%
%
%

\author{Banglei Guan \ and \ Ji Zhao
\IEEEcompsocitemizethanks{\IEEEcompsocthanksitem B. Guan is with College of Aerospace Science and Engineering, National University of Defense Technology, Changsha 410073, China.\protect\\
E-mail: guanbanglei12@nudt.edu.cn
\IEEEcompsocthanksitem J. Zhao is in Beijing, China.
\protect\\
E-mail: zhaoji84@gmail.com
}
\thanks{(Corresponding author: Ji Zhao)}
}

%
%

\markboth{}
{Shell \MakeLowercase{\textit{et al.}}: Bare Demo of IEEEtran.cls for Computer Society Journals}
%



\IEEEtitleabstractindextext{%
\begin{abstract}
We present a novel method to compute the relative pose of multi-camera systems using two affine correspondences (ACs). Existing solutions to the multi-camera relative pose estimation are either restricted to special cases of motion, have too high computational complexity, or require too many point correspondences (PCs). Thus, these solvers impede an efficient or accurate relative pose estimation when applying RANSAC as a robust estimator. This paper shows that the 6DOF relative pose estimation problem using ACs permits a feasible minimal solution, when exploiting the geometric constraints between ACs and multi-camera systems using a special parameterization. We present a problem formulation based on two ACs that encompass two common types of ACs across two views, \emph{i.e.}, inter-camera and intra-camera. Moreover, the framework for generating the minimal solvers can be extended to solve various relative pose estimation problems, \emph{e.g.}, 5DOF relative pose estimation with known rotation angle prior. Experiments on both virtual and real multi-camera systems prove that the proposed solvers are more efficient than the state-of-the-art algorithms, while resulting in a better relative pose accuracy. Source code is available at \url{https://github.com/jizhaox/relpose-mcs-depth}.
\end{abstract}

\begin{IEEEkeywords}
Relative pose estimation, multi-camera system, affine correspondence, minimal solver, relative rotation angle
\end{IEEEkeywords}}

\maketitle

\IEEEdisplaynontitleabstractindextext

%
\IEEEpeerreviewmaketitle

\IEEEraisesectionheading{\section{Introduction}}
\IEEEPARstart{E}{stimating} the relative poses of a monocular camera, or a multi-camera system is a key problem in computer vision, which plays an important role in structure from motion (SfM), simultaneous localization and mapping (SLAM), and augmented reality (AR)~\cite{nister2004efficient,henrikstewenius2005solutions,scaramuzza2011visual,kneip2014efficient,schoenberger2016sfm,hane20173d,heng2019project}. A multi-camera system refers to a system of individual cameras that are rigidly fixed onto a single body, and it can cover a large field-of-view to obtain more information about the environment. Motivated by the fact that multi-camera systems are an interesting choice in the context of robotics applications such as autonomous drones and vehicles, relative pose estimation for multi-camera systems has started to receive attention lately~\cite{hane20173d,heng2019project,alyousefi2020multi,martyushev2020efficient,guanICCV2021minimal}.

Different from monocular cameras which are modeled by the perspective camera model, the multi-camera systems can be modeled by the generalized camera model~\cite{grossberg2001general,sturm2004generic,miraldo2011point}. The generalized camera model does not have a single center of projection. The light rays that pass through the multi-camera system do not intersect in a single center of projection, \emph{i.e.}, non-central projection~\cite{pless2003using}. 
Thus, the relative pose estimation problem of multi-camera systems~\cite{henrikstewenius2005solutions} is different from that of monocular cameras~\cite{nister2004efficient}, which leads to different equations. In addition, since feature correspondences established by feature matching inevitably contain outliers, the relative pose estimation algorithms are typically employed inside a robust estimation framework such as the Random Sample Consensus (RANSAC)~\cite{fischler1981random}. The computational complexity of the RANSAC estimator increases exponentially with respect to the number of feature correspondences needed. Thus, minimal solvers for relative pose estimation are very desirable for RANSAC schemes, which maximizes the probability of picking an all-inlier sample and reduces the number of necessary iterations~\cite{henrikstewenius2005solutions,li2008linear,kim2009motion,lim2010estimating,ventura2015efficient,kneip2016generalized}.
\begin{figure}[t]
	\begin{center}	
		\subfigure[Inter-camera ACs]
		{
			\includegraphics[width=0.42\linewidth]{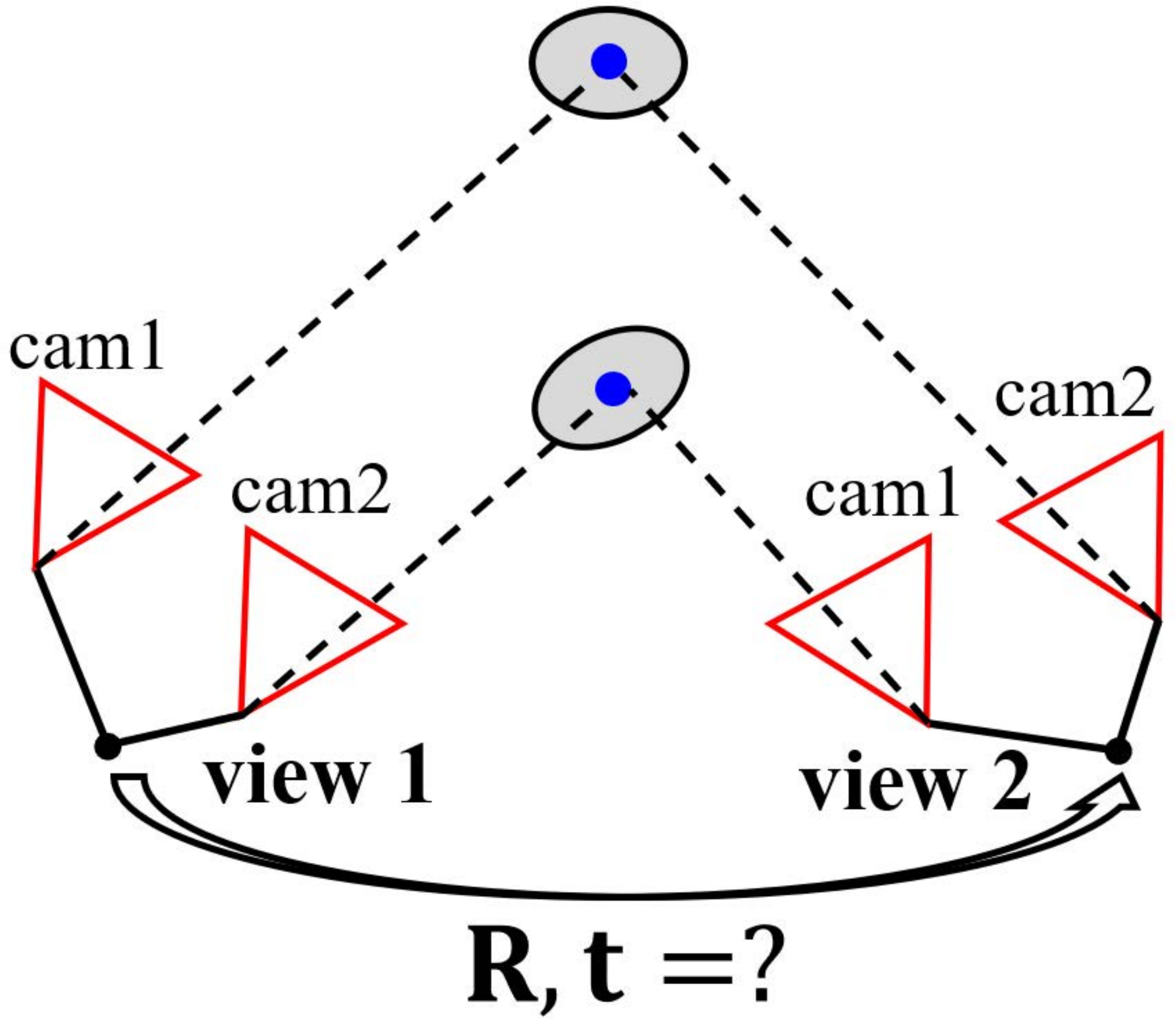}
		}
		\hspace{0.18in}
		\subfigure[Intra-camera ACs]
		{
			\includegraphics[width=0.42\linewidth]{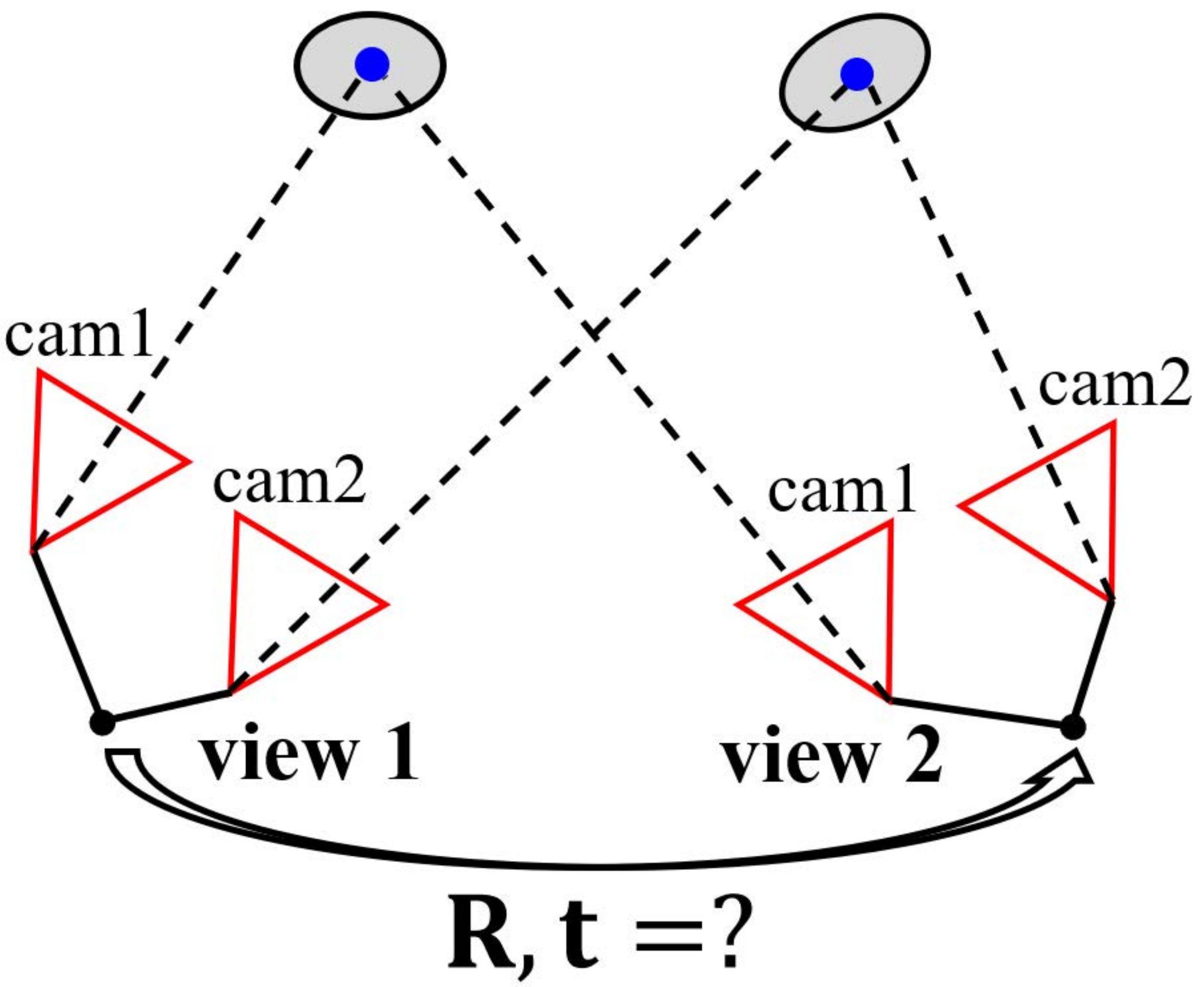}
		}
	\end{center}
	\vspace{-5pt} 
	\caption{Relative pose estimation from two ACs for a multi-camera system. Red triangle represents a single camera, and gray ellipse represents a spatial patch which relates to an AC. Specifically, inter-camera ACs refer to correspondences which are seen by different cameras over two consecutive views. Intra-camera ACs refer to correspondences which are seen by the same camera over two consecutive views.}
	\label{fig:taster_5DOFand6DOF}
\end{figure} 

The development of minimal solvers for 6DOF relative pose estimation of multi-camera systems can be traced back to the method of Stew{\'e}nius~\emph{et al.} with six point correspondences (PCs)~\cite{henrikstewenius2005solutions}. Later, some methods have been subsequently proposed, such as the linear method with seventeen PCs~\cite{li2008linear}, iterative optimization method~\cite{kneip2014efficient} and global optimization method~\cite{Zhao2020GEM}. In addition, a lot of methods exploit additional constraints to further reduce the number of feature correspondences needed~\cite{hee2013motion,hee2014relative,sweeney2014solving,sweeney2015computing,liu2017robust}. Typical scenarios for these additional constraints include the information from an IMU and the specific movement of the camera. For the mobile devices equipped with both the cameras and the IMU, there is a well-known invariance of relative rotation angle across different coordinate systems. It is shown in~\cite{LiHeng-423,martyushev2020efficient,Li2020Relative} that the relative rotation angle of the IMU can be directly used as the relative rotation angle of the cameras, without requiring the rotational alignment between two sensors. The relative rotation angle is stable and accurate in general. This advantage makes the fusion of camera and IMU more flexible and convenient. With the known relative rotation angle, the 5DOF relative pose of multi-camera systems can be solved by five PCs~\cite{martyushev2020efficient}.	

In recent years, a number of solvers use affine correspondences (ACs), instead of PCs, to estimate the relative pose, which also reduces the number of required correspondences~\cite{bentolila2014conic,raposo2016theory,hajder2019relative,barath2020making,Eichhardt2020Relative,alyousefi2020multi,guanICCV2021minimal}. Because an AC carries more information than a PC. However, existing AC-based solvers to the 6DOF relative pose estimation for multi-camera systems are either restricted to pose priors~\cite{Guan_ICRA2021} or require at least six ACs~\cite{alyousefi2020multi}. Moreover, to the best of our knowledge, there is no AC-based solver to solve the relative pose estimation for multi-camera systems with known rotation angle. Thus, it is desirable to find AC-based minimal solvers for the relative pose estimation of multi-camera systems, whose efficiency and accuracy are both satisfactory. This allows us to reduce the computational complexity of the RANSAC procedure.

In this paper, we focus on the relative pose estimation problem of multi-camera systems from a minimal number of two ACs, see Fig.~\ref{fig:taster_5DOFand6DOF}. This work is the extension of our previous conference paper~\cite{guan2022relative_eccv}. We furthermore present the relative pose estimation for multi-camera systems with a known relative rotation angle. Moreover, this work includes discussion of degenerated configurations, more detailed analysis, additional comparisons, and real-world experiments. We propose minimal solvers based on the common configurations of two ACs across two views, \emph{i.e.}, inter-camera and intra-camera. The contributions of this paper are:
\begin{itemize}
		\item We derive the geometric constraints between ACs and multi-camera systems using a special parameterization, which eliminates the translation parameters by utilizing two depth parameters. Moreover, the implicit constraints about the affine transformation constraints are found and proved.
		\item We develop two novel minimal solvers for 6DOF relative pose estimation of multi-camera systems from two ACs. Both solvers are designed for two common types of ACs in practice. We obtain practical solvers for totally new settings. In addition, three degenerate cases are proved.
		\item With a known relative rotation angle, two minimal solvers are first proposed for 5DOF relative pose estimation of multi-camera systems. The proposed solvers are convenient and flexible in practical applications when the cameras and an IMU are equipped. The relative rotation angle of the cameras can be provided by the IMU directly, even though camera$-$IMU extrinsics are unavailable or unreliable.
		\item We exploit a unified and versatile framework for generating the minimal solvers, which uses the hidden variable technique to eliminate the depth parameters. This framework can be extended to solve various relative pose estimation problems, \emph{e.g.}, relative pose estimation with known rotation angle prior. 
\end{itemize}

The remainder of this paper is organized as follows: Section~\ref{sec:related_work} reviews related literature on relative pose estimation of multi-camera systems. Section~\ref{sec:6DOFmotion} presents the minimal solvers for 6DOF relative pose estimation of multi-camera systems. In Section~\ref{sec:5DOFmotion}, the proposed framework for generating the minimal solvers is extended to solve 5DOF relative pose estimation of multi-camera systems. Section~\ref{sec:configurations} shows the proofs of degenerated configurations for the proposed solvers. The performance of the proposed solvers is evaluated on both virtual and real multi-camera systems in Section~\ref{sec:experiments}. Finally, concluding remarks are given in Section~\ref{sec:conclusion}.

\section{Related Work}
\label{sec:related_work}
Stew{\'e}nius~\emph{et al.} proposed the first minimal solver based on algebraic geometry, and this solver requires 6 PCs in order to come up with 64 solutions~\cite{henrikstewenius2005solutions}. Kim~\emph{et al.} later presented alternative solvers for relative pose estimation with non-overlapping multi-camera systems using second-order cone programming~\cite{kim2007visual} or branch-and-bound technique over the space of all rotations~\cite{kim2009motion}. Clipp~\emph{et al.} also derived a solver using 6 PCs for non-overlapping multi-camera systems~\cite{clipp2008robust}. Lim~\emph{et al.} presented antipodal epipolar constraints on the relative pose by exploiting the geometry of antipodal points, which are available in large field-of-view cameras~\cite{lim2010estimating}. Li~\emph{et al.} used 17 PCs to solve the relative pose of multi-camera systems linearly, which ignores side-constraints on the generalized essential matrix and the contained essential and rotation matrices~\cite{li2008linear}. Kneip and Li proposed an iterative approach for the relative pose estimation with an efficient eigenvalue minimization strategy~\cite{kneip2014efficient}. The above mentioned works are designed for 6DOF relative pose estimation of multi-camera systems.

A number of methods estimate the relative pose of multi-camera systems with a prior. Typically, the priors include available IMU information~\cite{hee2014relative,sweeney2014solving,liu2017robust,martyushev2020efficient} and multi-camera movement prior~\cite{hee2013motion,guanICCV2021minimal}, which reduce the DOF of the relative pose problem. For the devices equipped with both the cameras and the IMU, the relative rotation angle of multi-camera systems can be derived from the reading of an IMU without requiring the camera-IMU extrinsics. By making use of the known relative rotation angle provided by the IMU, Martyushev~\emph{et al.}~\cite{martyushev2020efficient} estimated the 5DOF relative pose from 5 PCs. With known camera$-$IMU extrinsics, the IMU measurements can provide vertical direction of the multi-camera system. Sweeney~\emph{et al.}~\cite{sweeney2014solving}, Lee~\emph{et al.}~\cite{hee2014relative} and Liu~\emph{et al.}~\cite{liu2017robust} proposed several minimal solvers with 4 PCs to solve 4DOF relative pose. When the multi-camera system is mounted on ground robots and the movement follows the Ackermann motion model, Lee~\emph{et al.}~\cite{hee2013motion} used a minimum of 2 PCs to recover the 2DOF relative pose.

Recently, using ACs to estimate the relative pose of multi-camera systems has drawn much attention. Alyousefi and Ventura~\cite{alyousefi2020multi} proposed a linear solver to recover the 6DOF relative pose using 6 ACs, which generalizes the 17 PCs solver proposed by Li~\emph{et al.}~\cite{li2008linear}. Guan~\emph{et al.}~\cite{Guan_ICRA2021} used a first-order approximation to relative rotation to estimate the 6DOF relative pose, which generalizes the 6 PCs solver proposed by Ventura~\emph{et al.}~\cite{ventura2015efficient}. They assume that the relative rotation of the multi-camera systems between two consecutive views is small. Furthermore, Guan~\emph{et al.}~\cite{guanICCV2021minimal} estimated the 3DOF relative pose under planar motion with a single AC and estimated the 4DOF relative pose with known vertical direction with 2 ACs. In this paper, we focus on using a minimal number of 2 ACs to estimate the 6DOF relative pose of multi-camera systems, which does not rely on any motion constraints or pose priors. Moreover, the solver generation procedure is extended to solve 5DOF relative pose estimation with known rotation angle prior.

\section{\label{sec:6DOFmotion}6DOF Relative Pose Estimation for Multi-Camera Systems}
In this section, we assume that both the intrinsic and extrinsic parameters of multi-camera systems are known. Aiming at the common configurations of two ACs across two views in Fig.~\ref{fig:taster_5DOFand6DOF}, our purpose is to develop the minimal solvers for 6DOF relative pose estimation using inter-camera ACs and intra-camera ACs. The proposed solvers are the most common ones in practice for multi-camera systems. In 6DOF relative pose estimation for multi-camera systems, the inputs for the proposed solvers are two ACs.

\subsection{\label{sec:Parameterization}Parameterization}
We first formulate and parameterize the relative pose estimation problem for a multi-camera system. As shown in Fig.~\ref{fig:solver_6dof}, the multi-camera system is composed of multiple perspective cameras. Denote the extrinsic parameters of individual perspective camera $C_i$ as $\{\mathbf{Q}_i, \mathbf{s}_i\}$, where $\mathbf{Q}_i$ and $\mathbf{s}_i$ represent relative rotation and translation to the reference of the multi-camera system, respectively. Denote the relative pose of multi-camera systems as $\{\mathbf{R}, \mathbf{t}\}$, which represents the relative rotation and translation from view 1 to view 2 of the multi-camera system.
\begin{figure}[tbp]
	\begin{center}
		\includegraphics[width=0.8\columnwidth]{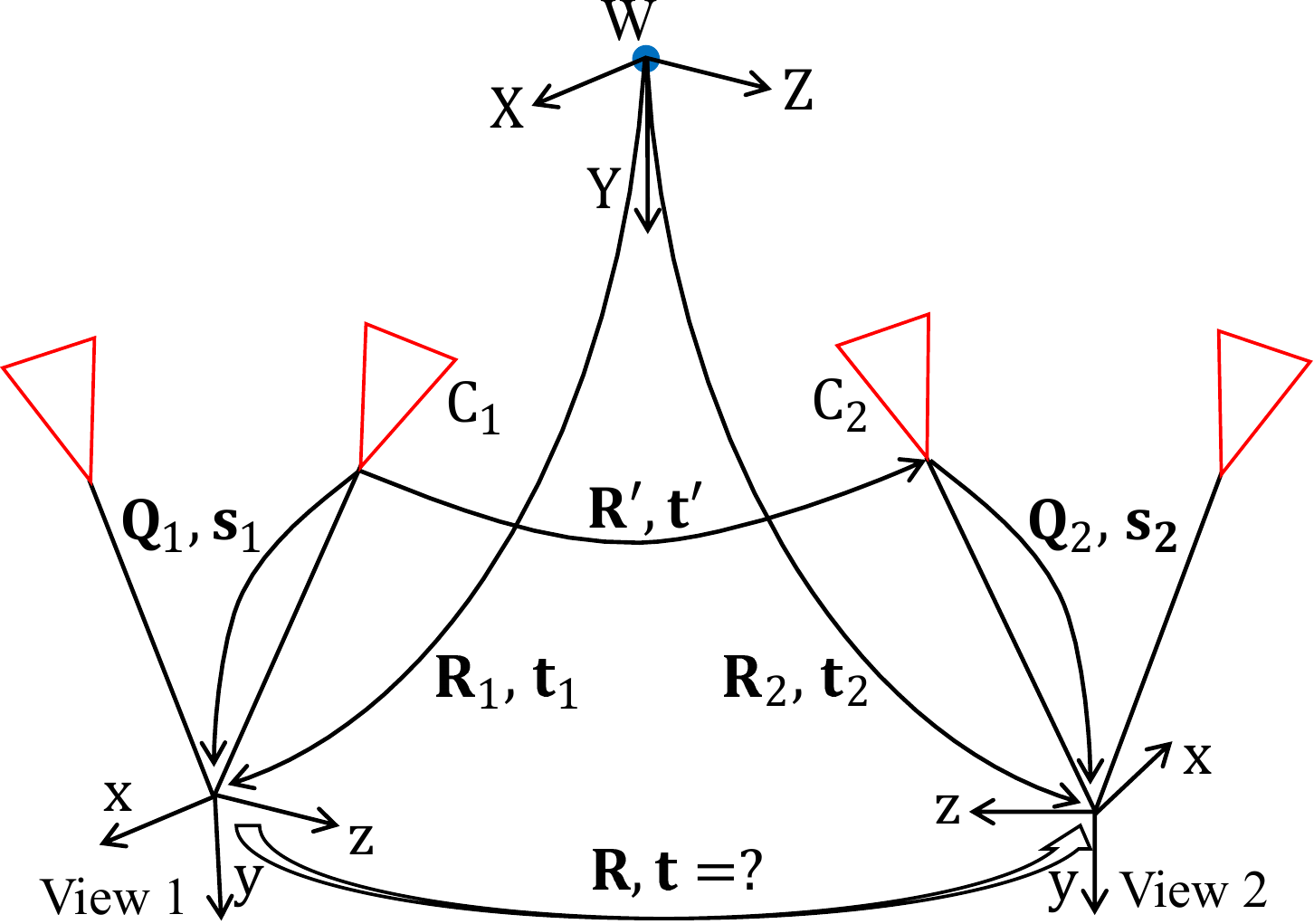}
	\end{center}
	\caption{Relative pose estimation for multi-camera systems.}
	\label{fig:solver_6dof}
\end{figure}

In order to eliminate the translation parameters, we use a special parameterization to formulate the relative pose estimation problem~\cite{henrikstewenius2005solutions}. Take an AC seen by the different cameras for an example. The $j\text{-th}$ AC relates the camera $C_1$ and $C_2$ across two views, see Fig.~\ref{fig:solver_6dof}. Let us denote the $j\text{-th}$ AC as $(\x_j, \x'_j, \A_j)$, where $\x_j$ and $\x'_j$ are the normalized homogeneous image coordinates of feature points in the view 1 and view 2, respectively. $\A_j$ is a $2\times 2$ local affine transformation, which relates the infinitesimal patches around $\x_j$ and $\x'_j$~\cite{raposo2016theory,barath2018efficient}. Suppose the $j\text{-th}$ AC is chosen to define a world reference system $W$. The origin of $W$ is set to the position of the $j\text{-th}$ AC in 3D space, and the orientation of $W$ is consistent with the reference of the multi-camera system in view~1. Denote the relative rotation and translation from reference $W$ to view~1 as $\{\mathbf{R}_1, \mathbf{t}_1\}$. Denote the relative rotation and translation from reference $W$ to view~2 as $\{\mathbf{R}_2, \mathbf{t}_2\}$. It can be seen that $\mathbf{R}_1 = \mathbf{I}$, $\mathbf{R}_2 = \R$. In this paper, the Cayley parameterization is used to parametrize the relative rotation $\mathbf{R}$:
{\begin{align}
		&\mathbf{R} = \frac{1}{1+q_x^2+q_y^2+q_z^2} \ . \nonumber \\ 
		&\begin{bmatrix}{1+q_x^2-q_y^2-q_z^2}& \ {2{q_x}{q_y}-2{q_z}}& \ {2{q_y}+2{q_x}{q_z}}\\
			{2{q_x}{q_y}+2{q_z}}& \ {1-q_x^2+q_y^2-q_z^2}& \ {2{q_y}{q_z}-2{q_x}}\\
			{2{q_x}{q_z}-2{q_y}}& \ {2{q_x}+2{q_y}{q_z}}& \ {1-q_x^2-q_y^2+q_z^2}
		\end{bmatrix},
		\label{eq:R6dof1}
\end{align}}

\noindent 
where $[1,q_x,q_y,q_z]^{\text{T}}$ is a homogeneous quaternion vector. Note that the Cayley parameterization introduces a degeneracy for $180^\circ$ rotations, but this is a rare case for consecutive views in the robotics applications~\cite{stewenius2005minimal,kneip2014efficient,zheng2015structure,zhao2020minimal}.

Next, we show that the translation parameters $\mathbf{t}_1$ and $\mathbf{t}_2$ can be removed by using two depth parameters. For a calibrated multi-camera system, each image point corresponds to a unique line in the reference of the multi-camera system. This line in 3D can be represented as a Pl{\"u}cker vector $\mathbf{L} = [\mathbf{p}^{\text{T}}, \mathbf{q}^{\text{T}}]^{\text{T}}$, where the 3D vectors $\mathbf{p}$ and $\mathbf{q}$ represent the unit direction vector and the moment vector, respectively~\cite{pless2003using}. They satisfy the constraint $\mathbf{p}\cdot{\mathbf{q}} = 0$. Thus, the set of points $\mathbf{X}(\lambda)$ on the 3D line can be parameterized as 
\begin{align}
	\mathbf{X}(\lambda) = \mathbf{q} \times \mathbf{p} + \lambda \mathbf{p}, \quad \lambda \in \mathbb{R}.
\end{align}

\noindent where $\lambda$ is the unknown depth parameter of 3D point. Since the origin of $W$ is set to the 3D position $\mathbf{X}_j$ corresponding to $j\text{-th}$ AC, the Pl{\"u}cker coordinates of the line connecting the 3D position $\mathbf{X}_j$ and the optical center of camera $C_i$ can be described as $[\mathbf{p}_{ij}^{\text{T}}, \mathbf{q}_{ij}^{\text{T}}]^{\text{T}}$ in the reference of the multi-camera system. The 3D position $\mathbf{X}_j$ in view $k$ satisfies the following constraint:
\begin{align}
	\mathbf{q}_{ij} \times \mathbf{p}_{ij} + \lambda_{jk} \mathbf{p}_{ij} = \mathbf{R}_k 
	\begin{bmatrix}
		0, \ 0, \ 0
	\end{bmatrix}^{\text{T}}
	+ \mathbf{t}_k, \quad k = 1,2.
	\label{eq:3Dpositionconstraint}
\end{align}

Based on Eq.~\eqref{eq:3Dpositionconstraint}, the translation $\mathbf{t}_k$ from $W$ to view~$k$ is parameterized as the linear expression in the unknown depth parameter $\lambda_{jk}$ 
\begin{align}
	\mathbf{t}_k = \mathbf{q}_{ij} \times \mathbf{p}_{ij} + \lambda_{jk} \mathbf{p}_{ij}, \quad k = 1,2.
	\label{eq:calcu_trans}
\end{align}

\noindent 
where $k$ represents the index of the views, $i$ represents the index of the cameras, and $j$ represents the index of the ACs. {It can be seen that $\lambda_{j1}$ and $\lambda_{j2}$ are the depth parameters of the origin of $W$ in views 1 and 2, respectively.}

Through the above special parameterization, the 6DOF relative pose of multi-camera systems can be described by five unknowns, which consist of three rotation parameters $\{q_x,q_y,q_z\}$ and two depth parameters $\{\lambda_{j1}, \lambda_{j2}\}$.  

\subsection{Geometric Constraints}
It has been shown in Fig.~\ref{fig:solver_6dof} that each AC relates two perspective cameras in view~1 and view~2. The relative pose between two cameras $[\mathbf{R}', \mathbf{t}']$ is determined by the composition of four transformations: (i) from one perspective camera to view~1, (ii) from view~1 to $W$, (iii) from $W$ to view~2, (iv) from view~2 to the other perspective camera. Among these four transformations, the part~(i) and (iv) are determined by known extrinsic parameters. In the part~(ii) and (iii), there are unknowns $\mathbf{R}$, $\mathbf{t}_1$ and $\mathbf{t}_2$ which are parameterized as $\{q_x, q_y, q_z, \lambda_{j1}, \lambda_{j2}\}$. Formally, the relative pose $[\mathbf{R}', \mathbf{t}']$ is represented as:
{\begin{align}
		\begin{bmatrix}
			\R' & \mathbf{t}' \\
			\mathbf{0} & 1
		\end{bmatrix}
		&= \begin{bmatrix}
			\Q_2 & \mathbf{s}_2 \\
			\mathbf{0} & 1
		\end{bmatrix}^{-1}
		\begin{bmatrix}
			\R & \mathbf{t}_2 \\
			\mathbf{0} & 1
		\end{bmatrix}
		\begin{bmatrix}
			\mathbf{I} & \mathbf{t}_1 \\
			\mathbf{0} & 1
		\end{bmatrix}^{-1}
		\begin{bmatrix}
			\Q_1 & \mathbf{s}_1 \\
			\mathbf{0} & 1
		\end{bmatrix} \nonumber \\
		&=  
		\begin{bmatrix}
			\Q_2^{\text{T}} \R \Q_1 & \Q_2^{\text{T}} (\R \mathbf{s}_1 - \R \mathbf{t}_1 + \mathbf{t}_2 - \mathbf{s}_2) \\
			\mathbf{0} & 1
		\end{bmatrix}.
\end{align}}

Once the relative pose $[\mathbf{R}', \mathbf{t}']$ between two perspective cameras for each AC is expressed, the essential matrix $\mathbf{E}' = [\mathbf{t}']_{\times} \mathbf{R}'$ can be represented as:
\begin{align}
	\mathbf{E}' &= \Q_2^{\text{T}} \left(\R [\mathbf{s}_1 - \mathbf{t}_1]_\times + [\mathbf{t}_2 - \mathbf{s}_2]_\times \R \right) \Q_1. 
	\label{eq:essential_matrix}
\end{align}

By substituting Eq.~\eqref{eq:calcu_trans} into Eq.~\eqref{eq:essential_matrix}, we obtain: 
\begin{align}
	\mathbf{E}' = & -\lambda_{j1} \Q_2^{\text{T}}\R [\mathbf{p}_{ij}]_\times\Q_1 + \lambda_{j2}\Q_2^{\text{T}}[\mathbf{p}'_{ij}]_\times \R\Q_1  \nonumber \\
	& + \Q_2^{\text{T}} \left(\R [\mathbf{s}_1 - \mathbf{q}_{ij} \times \mathbf{p}_{ij}]_\times + [\mathbf{q}'_{ij} \times \mathbf{p}'_{ij} - \mathbf{s}_2]_\times \R \right) \Q_1.
	\label{eq:essential_matrix_expand}
\end{align}

It can be verified that each entry in the essential matrix $\mathbf{E}'$ is linear with $\{\lambda_{j1}, \lambda_{j2}\}$. Generally speaking, one AC $(\x_j, \x'_j, \A_j)$ yields three independent constraints on the relative pose estimation of a multi-camera system, which consist of one epipolar constraint derived from PC $(\x_j, \x'_j)$ and two affine transformation constraints derived from local affine transformation $\A_j$. With known intrinsic camera parameters, the epipolar constraint of PC between view~1 and view~2 is given as follows~\cite{HartleyZisserman-472}:
\begin{align}
	\x_j'^{\text{T}} \mathbf{E}' \x_j = 0,
	\label{eq:constraint_epipolar_single}
\end{align}

The affine transformation constraints which describe the relationship of essential matrix $\mathbf{E}'$ and local affine transformation $\A_j$ are formulated as follows~\cite{raposo2016theory,barath2018efficient}:
\begin{align}
	(\mathbf{E}'^{\text{T}} \x_j')_{(1:2)} = -\A_j^{\text{T}} (\mathbf{E}' \x_j)_{(1:2)},
	\label{eq:constraint_affine_single}
\end{align}

\noindent where the subscript $\text{(1:2)}$ represents the first two equations.

Even though the perspective cameras are assumed, the geometric constraints can straightforwardly be generalized to generalized camera model as long as local image patches across views are obtained equivalently by arbitrary central camera models~\cite{barath2018efficient,eichhardt2018affine}. Based on Eqs.~\eqref{eq:constraint_epipolar_single} and~\eqref{eq:constraint_affine_single}, two ACs provide six independent constraints. Considering that the relative pose estimation problem of multi-cameras systems has 6DOF, the number of constraints is equal to the number of unknowns. Thus, we explore the minimal solvers for 6DOF relative pose estimation using two ACs.

\subsection{Equation System Construction}
Note that the special parameterization has been adopted by choosing one AC as the origin of world reference system in the subsection~\ref{sec:Parameterization}, we found the PC derived from the chosen AC cannot contribute one constraint since the coefficients of the resulting equation are zero. Thus, when $j\text{-th}$ AC is chosen to build up the world reference system $W$, five equations can be provided by two ACs. Specifically, $j\text{-th}$ AC provides two equations based on Eq.~\eqref{eq:constraint_affine_single} and the other AC provides three equations based on Eqs.~\eqref{eq:constraint_epipolar_single} and~\eqref{eq:constraint_affine_single}. By substituting Eq.~\eqref{eq:essential_matrix_expand} into Eqs.~\eqref{eq:constraint_epipolar_single} and~\eqref{eq:constraint_affine_single} and using the hidden variable technique~\cite{cox2006using}, the five equations provided by two ACs can be written as:
\begin{align}
	\underbrace{\mathbf{F}_j(q_x, q_y, q_z)}_{5\times 3}
	\begin{bmatrix}
		\lambda_{j1} \\ \lambda_{j2} \\ 1
	\end{bmatrix}
	= \mathbf{0}.
	\label{eq:new_6dof_solver}
\end{align}

The entries in $\mathbf{F}_j$ are quadratic in unknowns $q_x$, $q_y$, and $q_z$. Since Eq.~\eqref{eq:new_6dof_solver} has non-trivial solutions, the rank of $\mathbf{F}_j$ satisfies $\rank(\mathbf{F}_j) \le 2$. Thus, all the $3\times 3$ sub-determinants of $\mathbf{F}_j$ must be zero. This gives 10 equations about three unknowns $\{q_x, q_y, q_z\}$. Moreover, we can choose the other AC to build up the world reference system, and its orientation is also consistent with the reference of the multi-camera system in view $1$. Suppose the $j'$-th AC is chosen, we build a new equation system about the same rotation parameters $\{q_x, q_y, q_z\}$, which is similar to Eq.~\eqref{eq:new_6dof_solver}:
\begin{align}
	\underbrace{\mathbf{F}_{j'}(q_x, q_y, q_z)}_{5\times 3}
	\begin{bmatrix}
		\lambda_{{j'}1} \\ \lambda_{{j'}2} \\ 1
	\end{bmatrix}
	= \mathbf{0}.
	\label{eq:new_6dof_solver2}
\end{align}

Note that Eq.~\eqref{eq:new_6dof_solver2} provides new constraints which is different from Eq.~\eqref{eq:new_6dof_solver}. We use the computer algebra system \texttt{Macaulay~2}~\cite{grayson2002macaulay} to find that there are one dimensional families of extraneous roots if only Eq.~\eqref{eq:new_6dof_solver} or Eq.~\eqref{eq:new_6dof_solver2} is used. This phenomenon has also been observed in~\cite{henrikstewenius2005solutions,martyushev2020efficient}. Based on Eqs.~\eqref{eq:new_6dof_solver} and~\eqref{eq:new_6dof_solver2}, we have 20 equations with three unknowns $\{q_x, q_y, q_z\}$:
{\begin{gather}
		\det(\mathbf{N}(q_x, q_y, q_z)) = 0, \label{eq:submatrix_3by3} \\
		\small{\mathbf{N}\in\{3\times 3 \text{ submatrices of } {\mathbf{F}_j}\} \cup 
			\{3\times 3 \text{ submatrices of } {\mathbf{F}_{j'}}\} \nonumber}.
\end{gather}}

\noindent 
These equations have a degree of $6$, {\emph{i.e.}, the highest of the degrees of the monomials with non-zero coefficients is 6.} 

Moreover, we derive extra implicit constraints in our problem, \emph{i.e.}, the rank of $(\mathbf{F}_j)_\text{(1:2,1:3)}$ is $1$. The proof is provided as follows:
\begin{theorem}
	\label{theorem:extra_constraint}
	When $j\text{-th}$ AC is chosen to build up the world reference system, the corresponding affine transformation constraints satisfy $\rank((\mathbf{F}_j)_{(1:2,1:3)}) = 1$.
\end{theorem}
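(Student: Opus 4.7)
The plan is to exploit a collinearity that arises because the $j$-th AC is used to define the origin of $W$: since the ray from camera $C_i$ through $\x_j$ in view~1 has direction $\p_{ij}$, the calibrated vector $\Q_1\x_j$ is parallel to $\p_{ij}$, and the analogous relation holds in view~2 for $\Q_2\x'_j$ and $\p'_{ij}$. First, I would set
\begin{equation*}
\u(\lambda_{j2}) := \E'\x_j, \qquad \mathbf{v}(\lambda_{j1}) := (\E')^{\text{T}}\x'_j,
\end{equation*}
and substitute the essential-matrix expansion of Eq.~\eqref{eq:essential_matrix_expand}. Because $[\p_{ij}]_\times\Q_1\x_j=\mathbf{0}$, the $\lambda_{j1}$-dependent contribution to $\u$ vanishes; because $(\x'_j)^{\text{T}}\Q_2^{\text{T}}[\p'_{ij}]_\times=\mathbf{0}$, so does the $\lambda_{j2}$-dependent contribution to $\mathbf{v}$. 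Hence $\u$ depends only on $\lambda_{j2}$ and $\mathbf{v}$ only on $\lambda_{j1}$.

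The second step tightens this to a rigid collinearity. Using $\R(\mathbf{a}\times\mathbf{b})=(\R\mathbf{a})\times(\R\mathbf{b})$ and the BAC--CAB identity for the Pl\"ucker terms $\mathbf{q}_{ij}\times\p_{ij}$ (which reduce to the perpendicular component of the camera-centre vector with respect to $\p_{ij}$), direct simplification of the surviving expressions yields
\begin{equation*}
\u = \mu_2(\lambda_{j2})\,\n_2, \qquad \mathbf{v} = \mu_1(\lambda_{j1})\,\n_1,
\end{equation*}
where $\n_1,\n_2$ are fixed 3-vectors depending only on $\R$, the known camera extrinsics and the ray data, and $\mu_1,\mu_2$ are scalar affine functions of $\lambda_{j1}$ and $\lambda_{j2}$. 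Plugging these collinear forms into the affine-transformation constraint~\eqref{eq:constraint_affine_single} gives
\begin{equation*}
\mu_1(\lambda_{j1})\,(\n_1)_{(1:2)} + \mu_2(\lambda_{j2})\,\A_j^{\text{T}}(\n_2)_{(1:2)} = \mathbf{0}.
\end{equation*}
Reading off coefficients of $\lambda_{j1}$, $\lambda_{j2}$ and~$1$ shows that the first column of $(\F_j)_{(1:2,1:3)}$ is proportional to $(\n_1)_{(1:2)}$, the second to $\A_j^{\text{T}}(\n_2)_{(1:2)}$, and the third is a fixed linear combination of the first two (contributed by the constant parts of $\mu_1$ and $\mu_2$).

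In the final step I argue rank one. At any non-degenerate valid solution the reconstructed point does not coincide with either camera centre, so both $\mu_1$ and $\mu_2$ are non-zero, and $(\mu_1,\mu_2)^{\text{T}}$ is a non-trivial kernel element of the $2\times 2$ block whose columns are $(\n_1)_{(1:2)}$ and $\A_j^{\text{T}}(\n_2)_{(1:2)}$. Its determinant must therefore vanish, so these two 2-vectors are parallel; together with the third-column relation this makes the two rows of $(\F_j)_{(1:2,1:3)}$ proportional, proving $\rank((\F_j)_{(1:2,1:3)}) \le 1$, and generic non-vanishing of either row upgrades the inequality to equality.

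The main obstacle is the bookkeeping in the first two steps: one must track views, cameras and AC indices simultaneously through the essential-matrix expansion and apply the Pl\"ucker parameterization carefully, in order to confirm that $\u$ and $\mathbf{v}$ really collapse to the \emph{rigid} collinear form claimed, rather than merely being linear in one depth each. Once this compact representation is secured, the rank-one conclusion is a short linear-algebra consequence.
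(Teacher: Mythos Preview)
Your argument is correct and takes a genuinely different route from the paper's. The paper proves the rank condition by exhibiting a second, linearly independent null vector of $(\F_j)_{(1:2,1:3)}$: it introduces an auxiliary monocular essential matrix $\tilde{\E}$ between the two perspective cameras, parameterized directly by the camera-to-$W$ depths $(\lambda_{z1},\lambda_{z2})$, observes that the $\lambda$-coefficients of $\tilde{\E}$ coincide with those of $\E'$ in Eq.~\eqref{eq:essential_matrix_expand}, and concludes that $[\lambda_{z1},\lambda_{z2},0]^{\text{T}}$ annihilates the first two rows of $\F_j$. Together with the solution vector $[\lambda_{j1},\lambda_{j2},1]^{\text{T}}$ this forces a two-dimensional kernel, hence rank one.

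Your approach instead factorizes $\E'\x_j$ and $(\E')^{\text{T}}\x'_j$ explicitly. The crucial simplification in your second step is exactly that the camera centres $\mathbf{s}_1,\mathbf{s}_2$ lie on the respective Pl\"ucker rays, so $\mathbf{s}_1-\mathbf{q}_{ij}\times\p_{ij}$ and $\mathbf{s}_2-\mathbf{q}'_{ij}\times\p'_{ij}$ are parallel to $\p_{ij}$ and $\p'_{ij}$; this is what collapses the constant parts and yields the rigid collinear form $\u=\mu_2\,\n_2$, $\mathbf{v}=\mu_1\,\n_1$. The two routes are in fact compatible: your identity ``column~3 is a fixed linear combination of columns~1 and~2'' holds for \emph{all} $(q_x,q_y,q_z)$, and the combination coefficients are precisely $(-\lambda_0,-\lambda'_0)$, the camera-centre depths; subtracting from the solution null vector recovers the paper's $[\lambda_{z1},\lambda_{z2},0]^{\text{T}}$. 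What your approach buys is this extra structural information---it shows that the three $2\times2$ minors in Eq.~\eqref{eq:submatrix_2b23extra} coming from a single AC are scalar multiples of one another, so only one of the three extra equations per AC is algebraically independent. The paper's construction, on the other hand, is shorter once the auxiliary $\tilde{\E}$ is accepted and gives a clean geometric interpretation of the second null direction.
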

\begin{proof}
	To achieve this goal, we need to prove that $(\mathbf{F}_j)_\text{(1:2,1:3)}$ has two linearly independent null space vectors $\mathbf{v}_1$ and $\mathbf{v}_2$. Based on Eq.~\eqref{eq:new_6dof_solver},   $\mathbf{v}_1=[\lambda_{j1},\lambda_{j2},1]^{\text{T}}$ is obviously a null space vector. Then we suppose that the second null space vector can be expressed as $\mathbf{v}_2=[\lambda_{z1},\lambda_{z2},0]^{\text{T}}$, where $\lambda_{z1}$ and $\lambda_{z2}$ are two unknown depth parameters of the origin of world reference system $W$ in camera 1 (view 1) and camera 2 (view 2), respectively.
	
	For the multi-camera system in Fig.~\ref{fig:solver_6dof}, we parameterize the transformation of cameras with respect to the world reference system $W$. Denote the transformation between camera 1 in view 1 and $W$ as $[\Q_1^{\text{T}}, \lambda_{z1}\Q_1^{\text{T}}{\mathbf{p}}_{ij}]$, and the transformation between camera 2 in view 2 and $W$ as $[\Q_2^{\text{T}}\mathbf{R}, \lambda_{z2}\Q_2^{\text{T}}{\mathbf{p}}'_{ij}]$. 
	The transformation between camera 1 in view 1 and camera 2 in view 2 $\{\tilde{\R},\tilde{\mathbf{t}}\}$ can be computed. Thus, the corresponding essential matrix $\tilde{\mathbf{E}}=[\tilde{\mathbf{t}}]_\times\tilde{\R}$ is represented as
	\begin{align} 
		\tilde{\mathbf{E}} &= -\lambda_{z1} \Q_2^{\text{T}}\R [\mathbf{p}_{ij}]_\times\Q_1 + \lambda_{z2}\Q_2^{\text{T}}[\mathbf{p}'_{ij}]_\times \R\Q_1.
		\label{eq:essential_matrix_single_generalized}
	\end{align}
		
	Note that the coefficients of the unknowns $\lambda_{z1}$ and $\lambda_{z2}$ in Eq.~\eqref{eq:essential_matrix_single_generalized} are the same as the coefficients of the unknowns $\lambda_{j1}$ and $\lambda_{j2}$ in Eq.~\eqref{eq:essential_matrix_expand}. Based on the Eq.~\eqref{eq:constraint_affine_single}, the affine transformation constraints can be written as follows
	\begin{align}
		(\mathbf{G}_j)_{(1:2,1:2)}
		\begin{bmatrix}
			\lambda_{z1} \\ \lambda_{z2}
		\end{bmatrix}
		= \mathbf{0}.
		\label{eq:new_6dof_solver_22}
	\end{align} 
	
	In comparison with Eq.~\eqref{eq:new_6dof_solver}, $(\mathbf{G}_j)_{(1:2,1:2)}$ is the first 2$\times$2 sub-matrix of $(\mathbf{F}_j)_\text{(1:2,1:3)}$. We can see that the null space vector $\mathbf{v}_2=[\lambda_{z1},\lambda_{z2},0]^{\text{T}}$ is also the null space vector of $(\mathbf{F}_j)_\text{(1:2,1:3)}$. Thus, the rank of $(\mathbf{F}_j)_\text{(1:2,1:3)}$ is 1.  	
\end{proof}

Based on Theorem~\ref{theorem:extra_constraint}, the affine transformation constraints provide extra equations for our problem. Only if $j\text{-th}$ AC is chosen to build up the world reference system $W$, two affine transformation constraints of $j\text{-th}$ AC are used in the equation system construction. Similarly, when we choose the $j'$-th AC to build up the world reference, the extra equations of the corresponding affine transformation constraints can also be provided. Thus, there are 6 extra equations for the relative pose estimation using ACs:
{\begin{gather}
		\det(\mathbf{M}(q_x, q_y, q_z)) = 0, \label{eq:submatrix_2b23extra} \\
		{\mathbf{M}\in\{2\times 2 \text{ submatrices of } {(\mathbf{F}_j)_\text{(1:2,1:3)}}\} \ \cup } \nonumber \\
		\quad \quad { \ \{2\times 2 \text{ submatrices of } {(\mathbf{F}_{j'})_\text{(1:2,1:3)}}\} \nonumber}.
\end{gather}}

These extra equations have a degree of $4$. {Note that the extra implicit constraints Eq.~\eqref{eq:submatrix_2b23extra} are independent of Eqs.~\eqref{eq:new_6dof_solver} and~\eqref{eq:new_6dof_solver2}. For geometric explanation, the extra constraints encode that the affine transformation constraints come from a perspective camera of two viewpoints. As we will see later, using the extra constraints from Theorem~\ref{theorem:extra_constraint} reduces the number of solutions.} 

\subsection{\label{sec:Solve6DOF}Polynomial System Solving}
We propose two minimal solvers based on the common configurations of two ACs in multi-camera systems, including an inter-camera solver and an intra-camera solver. The inter-camera solver uses inter-camera ACs seen by the different cameras over two views. It is suitable for multi-camera systems with large overlapping of views. The intra-camera solver uses intra-camera ACs seen by the same camera over two views. It is suitable for multi-camera systems with non-overlapping or small-overlapping of views. 

A suitable way to find algebraic solutions to the polynomial equation system Eqs.~\eqref{eq:submatrix_3by3} and~\eqref{eq:submatrix_2b23extra} is to use the Gr{\"o}bner basis technique. To keep numerical stability and avoid large number arithmetic during the calculation of Gr{\"o}bner basis, a random instance of the original equation system is constructed in a finite prime field $\mathbb{Z}_p$~\cite{lidl1997finite}. The relations between all observations are appropriately preserved. Then, we use \texttt{Macaulay~2}~\cite{grayson2002macaulay} to calculate Gr{\"o}bner basis. Finally, the solver is found with the automatic Gr{\"o}bner basis solver~\cite{larsson2017efficient}. 
We denote these polynomial equations in Eq.~\eqref{eq:submatrix_3by3} and Eq.~\eqref{eq:submatrix_2b23extra} as $\mathcal{E}_1$ and $\mathcal{E}_2$, respectively. As we will see later, the polynomial equations $\mathcal{E}_1$ and $\mathcal{E}_2$ can be extended to solve various relative pose estimation problems, such as with known rotation angle. In this paper, $\mathcal{E}_1$ is sufficient to solve the relative pose with inter-camera ACs. For intra-camera ACs, there are one-dimensional families of extraneous roots if only $\mathcal{E}_1$ is used. Moreover, using both $\mathcal{E}_1$ and $\mathcal{E}_2$ can reduce the number of solutions in the inter-camera case. 
\begin{table}[tbp]
	\centering
	\caption{Minimal solvers for 6DOF relative pose estimation of multi-camera systems. Inter-camera refers to ACs which are seen by the different cameras over two views, inter-camera refers to ACs which are seen by the same camera over two views. \texttt{\#sol} indicates the number of solutions. \texttt{$1$-dim} indicates one dimensional families of extraneous roots.}
	\begin{center}
		\setlength{\tabcolsep}{1.8mm}{
			\scalebox{1.18}{
				\begin{tabular}{c||c|c|c|c} 
					\hline
					\multirow{2}{*}{\centering AC type} & \multicolumn{2}{c|}{$\mathcal{E}_1$} &  \multicolumn{2}{c}{$\mathcal{E}_1+\mathcal{E}_2$}  \\ 
					\cline{2-5} 
					&    \#sol  & template   &   \#sol    &   template  \\ 
					\hline
					{Inter-camera} & $56$ & $56\times 120$ & $48$ &  $64\times 120$ \\ \hline
					{Intra-camera} & 1-dim & $-$ & $48$ & $72\times 120$ \\
					\hline
		\end{tabular}}}
	\end{center}
	\label{tab:complete_solution}
\end{table}

Table~\ref{tab:complete_solution} shows the resulting inter-camera and intra-camera solvers. We have the following observations. (1) If $\mathcal{E}_1$ is used, the inter-camera solver maximally has $56$ complex solutions and the elimination template of size $56\times120$. But the intra-camera case has one-dimensional families of extraneous roots. (2) If both $\mathcal{E}_1$ and $\mathcal{E}_2$ are used, the number of complex solutions obtained by the inter-camera solver can be reduced to 48. The number of complex solutions obtained by the intra-camera solver is also 48. The elimination template of the inter-camera solver and intra-camera solver is $64\times 120$ and $72\times120$, respectively. (3) Most of the solutions obtained by the proposed solvers are
usually complex solutions. When the minimal solvers are integrated within the RANSAC framework to reject outliers, these complex solutions can be removed directly in the best candidate selection stage. (4) For the inter-camera case, using equations from $\mathcal{E}_1$ results in smaller eliminate templates than using $\mathcal{E}_1+\mathcal{E}_2$. Meanwhile, the solver resulting from $\mathcal{E}_1$ has better numerical stability than the solver resulting from $\mathcal{E}_1+\mathcal{E}_2$. This phenomenon has also been observed in previous literature~\cite{byrod2009fast}, which shows that the number of basis might affect the numerical stability.   

Once the rotation parameters $\{q_x, q_y, q_z\}$ are obtained, $\mathbf{R}$ can be obtained immediately. Then $\{\lambda_{jk}\}_{k=1,2}$ and  $\{\lambda_{{j'}k}\}_{k=1,2}$ are determined by finding the null space of $\mathbf{F}_j$ and ${\mathbf{F}_{j'}}$, respectively. Note that the translations estimated by $\{\lambda_{jk}\}$ and $\{\lambda_{{j'}k}\}$ are theoretically the same in minimal problems. Take the translation estimation using $\{\lambda_{jk}\}$ for an example. We can calculate $\mathbf{t}_1$ and $\mathbf{t}_2$ by Eq.~\eqref{eq:calcu_trans}. Finally we calculate the relative pose by compositing the transformations $[\mathbf{R}_1, \mathbf{t}_1]$ and $[\mathbf{R}_2, \mathbf{t}_2]$.     

\section{\label{sec:5DOFmotion}5DOF Relative Pose Estimation for Multi-Camera Systems}
In this section, we show that the proposed minimal solver generation framework can be extended to solve 5DOF relative pose estimation with known rotation angle prior. Suppose that the relative rotation angle between view~1 and view~2 is known, which can be provided by the IMU directly, even though camera$-$IMU extrinsics are unavailable or unreliable. Three unknowns $\{q_x,q_y,q_z\}$ satisfy the constraint as follows:
\begin{equation}
	\begin{aligned}	
		q_x^2+q_y^2+q_z^2 - \tan^2(\theta/2) = 0,
	\end{aligned}		
	\label{eq:R5dof1_knownAngle}
\end{equation}
where $\theta$ is the relative rotation angle between two views. In 5DOF relative pose estimation for multi-camera systems, the inputs for the proposed solvers are two ACs and the relative rotation angle of the multi-camera system.

\subsection{Equation System Construction}
With known relative rotation angle provided by IMU, the relative pose estimation problem of multi-camera systems has 5DOF. Considering that two ACs provide six independent constraints, the number of constraints is greater than the number of unknowns, and there is a redundant constraint. Thus, we randomly choose four equations from Eq.~\eqref{eq:new_6dof_solver} to explore the minimal case solution. For example, two affine transformation constraints of $j\text{-th}$ AC, and the epipolar constraint and the first affine transformation constraint of the other AC are stacked into four equations in five unknowns, \emph{i.e.}, the first four equations of Eq.~\eqref{eq:new_6dof_solver}:
\begin{align}
	\underbrace{\mathbf{F}'_j(q_x, q_y, q_z)}_{4\times 3}
	\begin{bmatrix}
		\lambda_{j1} \\ \lambda_{j2} \\ 1
	\end{bmatrix}
	= \mathbf{0}.
	\label{eq:new_5dof_solver}
\end{align} 

Since Eq.~\eqref{eq:new_5dof_solver} has non-trival solutions, the rank of $\mathbf{F}'_j$ satisfies $\rank(\mathbf{F}'_j) \le 2$. Thus, all the $3\times 3$ sub-determinants of $\mathbf{F}'_j$ must be zero. This gives four equations about three unknowns $\{q_x, q_y, q_z\}$.

Similarly, we can choose the other AC to build up the world reference $W$. Suppose the $j'$-th AC is chosen, we build an equation system which is similar to Eq.~\eqref{eq:new_5dof_solver}:
\begin{align}
	\underbrace{\mathbf{F}'_{j'}(q_x, q_y, q_z)}_{4\times 3}
	\begin{bmatrix}
		\lambda_{{j'}1} \\ \lambda_{{j'}2} \\ 1
	\end{bmatrix}
	= \mathbf{0}.
	\label{eq:new_5dof_solver2}
\end{align}

Note that Eq.~\eqref{eq:new_5dof_solver2} provides new constraints which are different from Eq.~\eqref{eq:new_5dof_solver}. There are one dimensional families of extraneous roots if only Eq.~\eqref{eq:new_5dof_solver} or Eq.~\eqref{eq:new_5dof_solver2} is used. Based on Eqs.~\eqref{eq:new_5dof_solver} and~\eqref{eq:new_5dof_solver2}, we have eight equations with three unknowns $\{q_x, q_y, q_z\}$:
{\begin{gather}
		\det(\mathbf{N}(q_x, q_y, q_z)) = 0, \label{eq:submatrix_3by3_5dof} \\
		{\mathbf{N}\in\{3\times 3 \text{ submatrices of } {\mathbf{F}'_j}\} \cup 
			\{3\times 3 \text{ submatrices of } {\mathbf{F}'_{j'}}\} \nonumber}.
\end{gather}}

\vspace{-5pt}
\noindent These equations have a degree of $6$. Moreover, based on Proposition~\ref{theorem:extra_constraint}, the affine transformation constraints provide extra constraints for our problem. Only if $j\text{-th}$ AC is chosen to build up the world reference system $W$, two affine transformation constraints of $j\text{-th}$ AC are used in the equation system construction. Thus, we can derive extra equations from the affine transformation constraints, \emph{i.e.}, the rank of $(\mathbf{F}'_j)_\text{(1:2,1:3)}$ in Eq.~\eqref{eq:new_5dof_solver} is $1$. There are three extra equations for the 5DOF relative pose estimation problem of the multi-camera system:
{\begin{gather}
		\det(\mathbf{M}(q_x, q_y, q_z)) = 0, \label{eq:submatrix_2b23extra_5dof} \\
		{\mathbf{M}\in\{2\times 2 \text{ submatrices of } {(\mathbf{F}'_j)_\text{(1:2,1:3)}}\} \nonumber}.
\end{gather}}

\vspace{-10pt}
These three equations have a degree of $4$. Note that Eq.~\eqref{eq:submatrix_2b23extra_5dof} are independent of Eqs.~\eqref{eq:new_5dof_solver} and~\eqref{eq:new_5dof_solver2}. Using the extra constraints reduces the number of solutions.
\subsection{Polynomial System Solving}
With the known relative rotation angle of multi-camera systems, we also propose two minimal solvers based on the configurations of two ACs in the multi-camera system, including inter-camera solver and intra-camera solver. Similar to Section~\ref{sec:6DOFmotion}, the Gr{\"o}bner basis technique~\cite{larsson2017efficient} is used to produce the minimal solvers. We denote these polynomial equations in Eqs.~\eqref{eq:R5dof1_knownAngle}\eqref{eq:submatrix_3by3_5dof} and Eq.~\eqref{eq:submatrix_2b23extra_5dof} as $\mathcal{E}_1$ and $\mathcal{E}_2$, respectively. $\mathcal{E}_1$ is sufficient to solve the relative pose with both inter-camera and intra-camera ACs.  However, using both $\mathcal{E}_1$ and $\mathcal{E}_2$ can reduce the number of solutions. 

Table~\ref{tab:complete_solution_5DOF} shows the resulting inter-camera and intra-camera solvers. We have the following observations. (1) If $\mathcal{E}_1$ is used, the inter-camera solver	maximally has $42$ complex solutions and the elimination template of size $98\times164$. The intra-camera solver maximally has $44$ complex solutions and the elimination template of size $120\times164$. (2) If both $\mathcal{E}_1$ and $\mathcal{E}_2$ are used, the number of complex solutions obtained by two solvers can be reduced to 36. The elimination template of inter-camera solver and intra-camera solver are $104\times 164$ and $84\times120$, respectively. (3) The solver resulting from $\mathcal{E}_1$ has better numerical stability than the solver resulting from $\mathcal{E}_1+\mathcal{E}_2$.    
\begin{table}[tbp]
	\caption{Minimal solvers for 5DOF relative pose estimation of multi-camera systems. Inter-camera refers to ACs which are seen by the different cameras over two views, inter-camera refers to ACs which are seen by the same camera over two views. \texttt{\#sol} indicates the number of solutions. \texttt{$1$-dim} indicates one dimensional families of extraneous roots.}
	\vspace{-8pt}
	\begin{center}
		\setlength{\tabcolsep}{1.8mm}{
			\scalebox{1.18}{
				\begin{tabular}{c|c|c|c|c}
					\hline
					\multirow{2}{*}{\centering AC type} & \multicolumn{2}{c|}{$\mathcal{E}_1$} &  \multicolumn{2}{c}{$\mathcal{E}_1+\mathcal{E}_2$}  \\ 
					\cline{2-5} 
					&    \#sol  & template   &   \#sol    &   template  \\ 
					\hline
					inter-camera & $42$ & $98\times 164$ & $36$ &  $104\times 164$ \\ \hline
					intra-camera & $44$ & $120\times 164$ & $36$ & $84\times 120$ \\
					\hline
		\end{tabular} }}
	\end{center}
	\label{tab:complete_solution_5DOF}
	\vspace{-6pt}
\end{table}

Once rotation parameters $\{q_x, q_y, q_z\}$ are obtained, $\mathbf{R}$ can be obtained immediately. Then $\{\lambda_{jk}\}_{k=1,2}$ is determined by finding the null space of $\mathbf{F}_j$, see Eq.~\eqref{eq:new_5dof_solver}. Next we can calculate $\mathbf{t}_1$ and $\mathbf{t}_2$ by Eq.~\eqref{eq:calcu_trans}. Finally we calculate the relative pose of the multi-camera systems by compositing the transformations $[\mathbf{R}_1, \mathbf{t}_1]$ and $[\mathbf{R}_2, \mathbf{t}_2]$. 

\section{\label{sec:configurations}Degenerated Configurations}
In this section, we prove three cases of critical motions for the proposed solvers in Sections~\ref{sec:6DOFmotion} and~\ref{sec:5DOFmotion}, including both the inter-camera solvers and the intra-camera solvers. In these critical configurations, the rotation and the translation direction between two views of the multi-camera system can be correctly recovered, but the metric scale of translation is unobtainable. The proofs of degenerated configurations modeled in different ways have been proposed in~\cite{Guan_ICRA2021}.

\begin{proposition}
	\label{theorem:inter_cam}
	For inter-camera ACs, if a multi-camera system undergoes pure translation and the baseline of two cameras is parallel with the translation direction, the metric scale of translation cannot be recovered.
\end{proposition}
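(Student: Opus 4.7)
The plan is to exhibit an explicit one-parameter family of rigid motions that reproduces exactly the same inter-camera AC observations, parameterised by the translation magnitude; the magnitude is therefore intrinsically unrecoverable. The mechanism is that under the two hypotheses the generalized essential matrix $\mathbf{E}'$ of any inter-camera AC reduces to a fixed rank-$2$ matrix multiplied by a single scalar, and every constraint the solvers of Sections~\ref{sec:6DOFmotion}--\ref{sec:5DOFmotion} exploit is scale-homogeneous in $\mathbf{E}'$.

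Three steps carry this out. First, specialise the essential matrix formula~\eqref{eq:essential_matrix} to $\mathbf{R}=\mathbf{I}$ for an inter-camera AC shared between cameras $C_1$ and $C_2$; writing the rig translation as $\mathbf{t}'=\mathbf{t}_2-\mathbf{t}_1$ yields $\mathbf{E}'=\mathbf{Q}_2^{\mathrm{T}}[(\mathbf{s}_1-\mathbf{s}_2)+\mathbf{t}']_{\times}\mathbf{Q}_1$. Second, invoke the baseline-parallel hypothesis by writing $\mathbf{t}'=\gamma(\mathbf{s}_2-\mathbf{s}_1)$ for a real scalar $\gamma$; the essential matrix then collapses to $\mathbf{E}'=(\gamma-1)\,\mathbf{Q}_2^{\mathrm{T}}[\mathbf{s}_2-\mathbf{s}_1]_{\times}\mathbf{Q}_1$, so the unknown magnitude lives entirely in the prefactor $(\gamma-1)$. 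Third, observe that the epipolar constraint~\eqref{eq:constraint_epipolar_single} is bilinear, hence scale-invariant, in $\mathbf{E}'$, and that both sides of the affine constraint~\eqref{eq:constraint_affine_single} are linear in $\mathbf{E}'$; therefore the whole set of constraints contributed by any inter-camera AC is preserved by the rescaling $\mathbf{E}'\mapsto c\mathbf{E}'$ for every $c\neq 0$. Consequently any admissible $\gamma$ produces image-indistinguishable observations, fixing the direction $\mathbf{s}_2-\mathbf{s}_1$ of $\mathbf{t}'$ while leaving its magnitude free.

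The main obstacle I anticipate is not the algebra but pinning down the \emph{scope} of the ambiguity: I need to show that the stacked systems in Eqs.~\eqref{eq:new_6dof_solver} and~\eqref{eq:new_6dof_solver2} for the two inter-camera ACs simultaneously admit precisely the one-parameter orbit $\{(\mathbf{I},\gamma(\mathbf{s}_2-\mathbf{s}_1)):\gamma\in\mathbb{R}\setminus\{1\}\}$ and no $(\mathbf{R},\mathbf{t}')$ outside it, so that rotation and translation direction are genuinely recovered and the degeneracy is exactly one-dimensional. I plan to verify this with a generic-rank argument on $\mathbf{F}_j$ once the $\gamma$-homogeneity is factored out. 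A minor side check is required for the isolated value $\gamma=1$, which sends $\mathbf{E}'$ to zero and trivialises the constraints; it corresponds to the special translation $\mathbf{t}'=\mathbf{s}_2-\mathbf{s}_1$ and is a measure-zero exception within the family, so it does not affect the statement.
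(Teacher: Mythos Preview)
Your proposal is correct and follows essentially the same route as the paper: specialise Eq.~\eqref{eq:essential_matrix} to $\mathbf{R}=\mathbf{I}$, use the parallel-baseline hypothesis to collapse $\mathbf{E}'$ to a scalar multiple of a fixed matrix, and then note that both Eqs.~\eqref{eq:constraint_epipolar_single} and~\eqref{eq:constraint_affine_single} are homogeneous in $\mathbf{E}'$. The only cosmetic difference is the direction of parameterisation---the paper writes $\mathbf{s}_2-\mathbf{s}_1=a(\mathbf{t}_2-\mathbf{t}_1)$ and obtains the prefactor $(1-a)$, while you write $\mathbf{t}'=\gamma(\mathbf{s}_2-\mathbf{s}_1)$ and obtain $(\gamma-1)$; your planned check that rotation and translation direction remain recoverable, and the $\gamma=1$ edge case, go beyond what the paper's proof actually supplies.
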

\begin{proof}
	In the case of inter-camera ACs, each AC is seen by the different cameras over two consecutive views. For the pure translation case, the rotation between two views of the multi-camera system satisfies ${\mathbf{R}=\mathbf{I}}$. Since the baseline of two cameras is parallel with the translation direction, the translation satisfies ${\mathbf{s}}_{2}-{\mathbf{s}}_{1}={a}({\mathbf{t}_2 - \mathbf{t}_1})$, where $a$ is an unknown number, ${\mathbf{t}_2 - \mathbf{t}_1}$ is the translation between two views of the multi-camera system. The essential matrix in Eq.~\eqref{eq:essential_matrix} can be written as
	\begin{align}
		\mathbf{E}' &= \Q_2^T \left( [\mathbf{t}_2 - \mathbf{t}_1]_\times - [\mathbf{s}_2 - \mathbf{s}_1]_\times  \right) \Q_1 \nonumber \\
		&= \left(1-a \right)\Q_2^T {[\mathbf{t}_2 - \mathbf{t}_1]_\times} \Q_1.
		\label{eq:GECS5dof_DegenCase1}
	\end{align}
	
	The essential matrix $\mathbf{E}'$ is homogeneous with the translation between two views of the multi-camera system ${\mathbf{t}_2 - \mathbf{t}_1}$. We substitute Eq.~\eqref{eq:GECS5dof_DegenCase1} into Eqs.~\eqref{eq:constraint_epipolar_single} and~\eqref{eq:constraint_affine_single}. Then the geometric constraints provided by an AC become:
	\begin{align}
		&\x_j'^T {\Q_2^T {[\mathbf{t}_2 - \mathbf{t}_1]_\times} \Q_1} \x_j = 0, \label{eq:AC_Constraints_Degen1} \\
		(\Q_1^T {[\mathbf{t}_2 - \mathbf{t}_1]_\times} &\Q_2 \x_j')_{(1:2)} = \A_j^{T} (\Q_2^T {[\mathbf{t}_2 - \mathbf{t}_1]_\times} \Q_1 \x_j)_{(1:2)}.
		\label{eq:AC_Constraints_Degen2}
	\end{align}
	
	Suppose $\kappa$ is a free parameter, it can be verified that $\kappa({\mathbf{t}_2 - \mathbf{t}_1})$ satisfies Eqs.~\eqref{eq:AC_Constraints_Degen1} and~\eqref{eq:AC_Constraints_Degen2}. Thus, the metric scale of translation between two views of the multi-camera system cannot be recovered.	
\end{proof}

\begin{figure}[tbp]
	\begin{center}
		\includegraphics[width=0.7\linewidth]{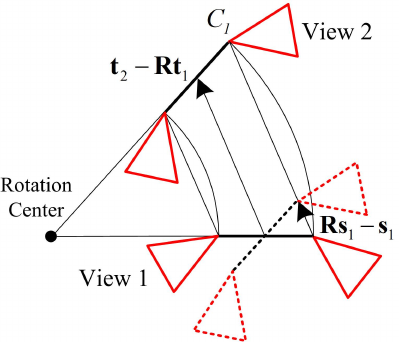}
	\end{center}
	\caption{Critical motion due to constant rotation rate.}
	\label{fig:Criticalmotion}
\end{figure}

\begin{proposition}
	\label{theorem:intra_cam}
	For intra-camera ACs, when a multi-camera system undergoes pure translation or constant rotation rate, both cases are degenerate motions. Specifically, the metric scale of translation cannot be recovered.
\end{proposition}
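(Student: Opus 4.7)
The strategy mirrors Proposition~\ref{theorem:inter_cam}: specialize Eq.~\eqref{eq:essential_matrix} to the intra-camera setting (so $\Q_1 = \Q_2 = \Q_i$ and $\mathbf{s}_1 = \mathbf{s}_2 = \mathbf{s}_i$ for the camera hosting each AC), then for each of the two motion regimes exhibit a one-parameter scaling of the baseline under which the essential matrix is preserved up to a global scalar. Since both Eq.~\eqref{eq:constraint_epipolar_single} and Eq.~\eqref{eq:constraint_affine_single} are homogeneous in $\mathbf{E}'$, any such rescaling leaves the constraints satisfied and witnesses the claimed scale ambiguity.

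For the pure-translation sub-case, I would substitute $\R = \mathbf{I}$ into the intra-camera specialization of Eq.~\eqref{eq:essential_matrix}. The two $\mathbf{s}_i$ contributions cancel automatically, leaving $\mathbf{E}' = \Q_i^{\text{T}} [\mathbf{t}_2 - \mathbf{t}_1]_\times \Q_i$, which is linear in the baseline. Replacing $\mathbf{t}_2 - \mathbf{t}_1$ by $\kappa(\mathbf{t}_2 - \mathbf{t}_1)$ for arbitrary $\kappa \in \mathbb{R}$ rescales $\mathbf{E}'$ by the same $\kappa$ and preserves both geometric constraints. A notable contrast with Proposition~\ref{theorem:inter_cam} is that no alignment between the baseline and a camera-to-camera direction is needed here, because the same offset $\mathbf{s}_i$ appears in both views and cancels identically.

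For the constant-rotation-rate sub-case, I would first use the identity $\R [\mathbf{a}]_\times = [\R \mathbf{a}]_\times \R$ to rewrite the intra-camera essential matrix as $\mathbf{E}' = [\Q_i^{\text{T}} \mathbf{t}']_\times (\Q_i^{\text{T}} \R \Q_i)$, where $\mathbf{t}' = (\R - \mathbf{I}) \mathbf{s}_i + \mathbf{t}_2 - \R \mathbf{t}_1$. The circular trajectory of Fig.~\ref{fig:Criticalmotion} corresponds algebraically to $\mathbf{t}_2 - \R \mathbf{t}_1 = (\mathbf{I} - \R) \boldsymbol{\tau}$ for some center of rotation $\boldsymbol{\tau}$, which yields $\mathbf{t}' = (\R - \mathbf{I})(\mathbf{s}_i - \boldsymbol{\tau})$. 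The plan is then to show that sliding $\boldsymbol{\tau}$ along the line through $\mathbf{s}_i$ parallel to the rotation-induced baseline rescales $\mathbf{t}_2 - \mathbf{t}_1$ while only rescaling $\mathbf{t}'$, hence preserving $\mathbf{E}'$ projectively; substituting into Eqs.~\eqref{eq:constraint_epipolar_single} and~\eqref{eq:constraint_affine_single} then closes the argument exactly as in Case~I.

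The hard part will be the constant-rotation-rate case: the rescaling argument must work simultaneously for both ACs, including the awkward situation where the two intra-camera ACs are hosted by different physical cameras $C_i$ and $C_j$. Requiring a common scaling parameter across the two cameras should force exactly the geometric configuration illustrated in Fig.~\ref{fig:Criticalmotion}, but unpacking this common-$\kappa$ condition into a clean algebraic criterion is the delicate step; the pure-translation sub-case, by contrast, drops out of a single substitution.
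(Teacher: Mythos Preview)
Your pure-translation sub-case matches the paper's argument exactly.

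For the constant-rotation-rate sub-case, your sliding-$\boldsymbol{\tau}$ plan has a gap: moving $\boldsymbol{\tau}$ along the line through $\mathbf{s}_i$ does rescale $\mathbf{t}'_i = (\R-\mathbf{I})(\mathbf{s}_i-\boldsymbol{\tau})$ for that one camera, but it does \emph{not} in general rescale the rig translation $(\mathbf{I}-\R)\boldsymbol{\tau}$, nor does it rescale $\mathbf{t}'_j=(\R-\mathbf{I})(\mathbf{s}_j-\boldsymbol{\tau})$ for a second camera $C_j$. So the family of motions you produce is not of the form $(\R,\kappa\mathbf{t})$, and the two-camera obstruction you anticipate is genuinely built into this parameterization. (Also, the relevant rig translation here is $\mathbf{t}_2-\R\mathbf{t}_1$, not $\mathbf{t}_2-\mathbf{t}_1$.)

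The paper sidesteps all of this by never introducing $\boldsymbol{\tau}$. It takes the alignment $\R\mathbf{s}_i-\mathbf{s}_i = a_i(\mathbf{t}_2-\R\mathbf{t}_1)$ as the algebraic content of the concentric-circle motion in Fig.~\ref{fig:Criticalmotion}, substitutes into your own expression $\mathbf{t}'_i=(\R-\mathbf{I})\mathbf{s}_i+\mathbf{t}_2-\R\mathbf{t}_1$, and obtains $\mathbf{E}'_i = (1+a_i)\,\Q_i^{\text{T}}[\mathbf{t}_2-\R\mathbf{t}_1]_\times\R\,\Q_i$. Now rescale the \emph{single} rig translation $\mathbf{t}_2-\R\mathbf{t}_1\mapsto\kappa(\mathbf{t}_2-\R\mathbf{t}_1)$: each $\mathbf{E}'_i$ becomes a scalar multiple of itself (the factor $(\kappa+a_i)/(1+a_i)$ may differ per camera, but the homogeneous constraints~\eqref{eq:constraint_epipolar_single}--\eqref{eq:constraint_affine_single} are indifferent to that). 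One $\kappa$ acting on the rig translation therefore handles all intra-camera ACs simultaneously, so the multi-camera case is no harder than the single-camera case and your ``delicate step'' disappears.
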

\begin{proof}
	In the case of intra-camera ACs, each AC is seen by the same camera over two consecutive views. So we have ${\mathbf{s}}_{1}={\mathbf{s}}_{2}$ and ${\mathbf{Q}_{1}}={\mathbf{Q}_{2}}$. 
	
	(1) For the pure translation case, with the assumption that ${\mathbf{R}=\mathbf{I}}$, the essential matrix in Eq.~\eqref{eq:essential_matrix} can be written as 
	\begin{align}
		\mathbf{E}' &= \Q_1^T \left( [\mathbf{t}_2 - \mathbf{t}_1]_\times\right) \Q_1.
		\label{eq:GECS5dof_DegenCase2}
	\end{align}
	
	The essential matrix is homogeneous with the translation between two views of the multi-camera system ${\mathbf{t}_2 - \mathbf{t}_1}$. Suppose $\kappa$ is a free parameter, it can be verified that $\kappa({\mathbf{t}_2 - \mathbf{t}_1})$ invariably satisfies Eqs.~\eqref{eq:constraint_epipolar_single} and~\eqref{eq:constraint_affine_single}.
	
	(2) For the constant rotation rate case,~\emph{i.e.}, both camera paths move along concentric circles, the proof is inspired by~\cite{clipp2008robust}. We take the camera $C_1$ in Fig.~\ref{fig:Criticalmotion} as an example, the rotation induced translation ${{\mathbf{R}}{\mathbf{s}_{1}}-{\mathbf{s}_{1}}}$ is aligned with the translation ${\mathbf{t}_2 - {\mathbf{R}}\mathbf{t}_1}$. Denote ${{\mathbf{R}}{\mathbf{s}_{1}}-{\mathbf{s}_{1}}} = a({\mathbf{t}_2 - {\mathbf{R}}\mathbf{t}_1})$ and substitute it to Eq.~\eqref{eq:essential_matrix}, the essential matrix becomes
	\begin{align}
		\mathbf{E}' &= \left(1+a \right)\Q_1^T \left( [\mathbf{t}_2 - {\mathbf{R}}\mathbf{t}_1]_\times {\mathbf{R}} \right) \Q_1.
		\label{eq:GECS5dof_DegenCase3}
	\end{align}
	
	The essential matrix is homogeneous with the translation between two views of the multi-camera system ${\mathbf{t}_2 - {\mathbf{R}}\mathbf{t}_1}$. Suppose $\kappa$ is a free parameter, it can be verified that $\kappa({\mathbf{t}_2 - {\mathbf{R}}\mathbf{t}_1})$  also satisfies Eqs.~\eqref{eq:constraint_epipolar_single} and~\eqref{eq:constraint_affine_single}.
\end{proof}	

To deal with these degenerate cases, we can use auxiliary sensors, such as integrating the acceleration over time from an IMU, to recover the metric scale of the translation~\cite{liu2017robust,Guan_ICRA2021,guanICCV2021minimal}. Moreover, in the absence of auxiliary sensors, since the frame rate of current cameras is high and the multi-camera system usually moves at a constant speed within a short time, we can also use the metric scale of the previous image pairs to approximate the current metric scale. 

\section{\label{sec:experiments}Experiments}
We validate the performance of our solvers using both virtual and real multi-camera systems. For 6DOF relative pose estimation, the solvers proposed in Section~\ref{sec:6DOFmotion} are referred to as \texttt{2AC method}, specifically \texttt{2AC-inter} for inter-camera ACs and \texttt{2AC-intra} for intra-camera ACs. To further distinguish two solvers for inter-camera ACs, \texttt{2AC-inter-56} and \texttt{2AC-inter-48} are used to refer to the minimal solvers resulting from $\mathcal{E}_1$ and $\mathcal{E}_1+\mathcal{E}_2$, respectively. 

For 5DOF relative pose estimation with known relative rotation angle, the solvers proposed in Section~\ref{sec:5DOFmotion} are referred to as \texttt{2AC-ka method}, specifically \texttt{2AC-ka-inter} for inter-camera ACs and \texttt{2AC-ka-intra} for intra-camera ACs. To further distinguish two solvers for inter-camera ACs, \texttt{2AC-inter-42} and \texttt{2AC-inter-36} are used to refer to the minimal solvers resulting from $\mathcal{E}_1$ and $\mathcal{E}_1+\mathcal{E}_2$, respectively. Meanwhile, to further distinguish two solvers for intra-camera ACs, \texttt{2AC-intra-44} and \texttt{2AC-intra-36} are used to refer to the solvers resulting from $\mathcal{E}_1$ and $\mathcal{E}_1+\mathcal{E}_2$, respectively. 

The proposed solvers are implemented in C++. The \texttt{2AC method} and the \texttt{2AC-ka method} are compared with state-of-the-art methods including \texttt{17PC-Li}~\cite{li2008linear}, \texttt{8PC-Kneip}~\cite{kneip2014efficient}, \texttt{6PC-Stew{\'e}nius}~\cite{henrikstewenius2005solutions}, \texttt{6AC-Ventura}~\cite{alyousefi2020multi} and \texttt{5PC-Martyushev}~\cite{martyushev2020efficient}. Note that the \texttt{5PC-Martyushev} solver also uses the known relative rotation angle as a prior. The proposed solvers are not compared with the methods which estimate the relative pose with different motion prior, such as planar motion~\cite{guanICCV2021minimal}, known gravity direction~\cite{sweeney2014solving,guanICCV2021minimal} and first-order approximation assumption~\cite{ventura2015efficient,Guan_ICRA2021}. All the solvers are integrated into RANSAC in order to remove outlier matches of the feature correspondences. The relative pose which produces the most inliers is used to measure the relative pose error. This also allows us to select the best candidate from multiple solutions by counting their inliers.

The relative rotation and translation of the multi-camera systems are compared separately in the experiments. The rotation error compares the angular difference between the ground truth rotation and the estimated rotation: ${\varepsilon _{\bf{R}}} = \arccos ((\trace({\mathbf{R}_{gt}}{{\mathbf{R}^{\text{T}}}}) - 1)/2)$, where $\mathbf{R}_{gt}$ and ${\mathbf{R}}$ denote the ground truth rotation and the corresponding estimated rotation, respectively. We evaluate the translation error by following the definition in~\cite{hee2014relative}: ${\varepsilon _{\bf{t}}} = 2\left\| ({{\mathbf{t}_{gt}}}-{\mathbf{t}})\right\|/(\left\| {\mathbf{t}_{gt}} \right\| + \left\| {{\mathbf{t}}} \right\|)$, where $\mathbf{t}_{gt}$ and ${\mathbf{t}}$ denote the ground truth translation and the corresponding estimated translation, respectively. ${\varepsilon _{\bf{t}}}$ denotes both the metric scale error and the direction error of the translation. The translation direction error is also evaluated separately by comparing the angular difference between the ground truth translation and the estimated translation: $\varepsilon_{\mathbf{t},\text{dir}} = \arccos (({{\mathbf{t}_{gt}^{\text{T}}}}{\mathbf{t}})/(\left\| {\mathbf{t}_{gt}} \right\| \cdot \left\| {{\mathbf{t}}} \right\|))$. 

Due to space limitations, the experiment results of the 5DOF relative pose estimation solvers on synthetic data and real data are provided in the supplementary material.
\begin{table*}[tbp]
	\caption{Runtime comparison of relative pose estimation solvers for multi-camera systems (unit:~$\mu s$).}
	\begin{center}
		\setlength{\tabcolsep}{1.0mm}{
			\scalebox{0.65}{
				\begin{tabular}{c||c|c|c|c|c|c|c|c|c|c|c|c}
					\hline						
					\small{Methods} &  \small{17PC-Li~\cite{li2008linear}} & \small{8PC-Kneip~\cite{kneip2014efficient}} &  \small{6PC-Stew.~\cite{henrikstewenius2005solutions}} &\small{6AC-Vent.~\cite{alyousefi2020multi}} & \small{5PC-Mart.~\cite{martyushev2020efficient}} & \small{\textbf{2AC-inter-56}}& \small{\textbf{2AC-inter-48}}& \small{\textbf{2AC-intra}} & \small{\textbf{2AC-ka-inter-42}} & \small{\textbf{2AC-ka-inter-36}} & \small{\textbf{2AC-ka-intra-44}} & \small{\textbf{2AC-ka-intra-36}}\\
					\hline
					\small{Runtime}& 43.3 & 102.0& 3275.4& 38.1 & 557.3& 1084.8 & 842.3 & 871.6& 1017.3& 930.8& 1236.8& 672.6\\
					\hline
		\end{tabular}}}
	\end{center}
	\label{tab:SolverTime_generalized}
\end{table*}

\begin{figure*}[tbp]
	\begin{center}
		\includegraphics[width=0.55\linewidth]{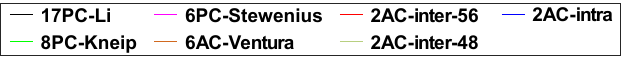}\\
		\subfigure[$\varepsilon_{\R}$]
		{
			\includegraphics[width=0.3\linewidth]{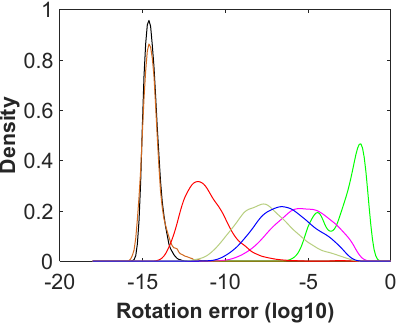}
		} 
		\subfigure[$\varepsilon_{\mathbf{t}}$]
		{
			\includegraphics[width=0.3\linewidth]{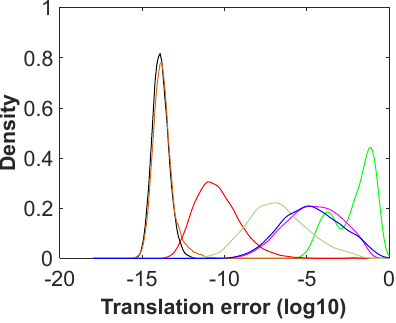}
		}
		\subfigure[$\varepsilon_{\mathbf{t},\text{dir}}$]   
		{
			\includegraphics[width=0.3\linewidth]{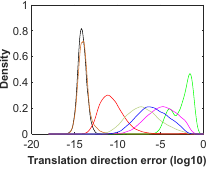}
		}
	\end{center}
	\caption{Probability density functions over 6DOF relative pose estimation errors on noise-free observations for multi-camera systems. The horizontal axis represents the $\log_{10}$ errors, and the vertical axis represents the density.}
	\label{fig:Numerical}
\end{figure*} 

\subsection{Efficiency Comparison and Numerical Stability}
The proposed solvers are evaluated on an Intel(R) Core(TM) i7-7800X 3.50GHz. All comparison solvers are implemented in C++. The \texttt{17PC-Li}, \texttt{8PC-Kneip} and \texttt{6PC-Stew{\'e}nius} are provided by OpenGV library~\cite{kneip2014opengv}. The \texttt{6AC-Ventura} is publicly available from the code of~\cite{alyousefi2020multi}. We converted the original Matlab code of \texttt{5PC-Martyushev} to C++ version~\cite{martyushev2020efficient}. Table~\ref{tab:SolverTime_generalized} shows the average processing times of the solvers over 10,000 runs. The methods \texttt{17PC-Li} and \texttt{6AC-Ventura} have low runtime, because they solve for the multi-camera motion linearly. Among the minimal solvers of 6DOF relative pose estimation, all the proposed solvers \texttt{2AC-inter-56}, \texttt{2AC-inter-48} and \texttt{2AC-intra} are significantly more efficient than the \texttt{6PC-Stew{\'e}nius} solver. It can also be seen that using the extra constraints about the affine transformation constraints makes the proposed AC-based solvers more efficient.
 	
Figure~\ref{fig:Numerical} reports the numerical stability comparison of the solvers in noise-free cases. We repeat the procedure 10,000 times and plot the empirical probability density functions as the function of the $\log_{10}$ estimated errors. Numerical stability represents the round-off error of solvers in noise-free cases. The solvers \texttt{17PC-Li} and \texttt{6AC-Ventura} achieve the best numerical stability, because the linear solvers with smaller computation burden have less round-off error. Since the \texttt{8PC-Kneip} solver adopts the iterative optimization, it is easy to fall into local minima. Among the minimal solvers, the numerical stability of all the proposed solvers \texttt{2AC-inter-56}, \texttt{2AC-inter-48} and \texttt{2AC-intra} is better than the \texttt{6PC-Stew{\'e}nius} solver. Moreover, the \texttt{2AC-inter-56} solver has better numerical stability than the \texttt{2AC-inter-48} solver, which shows that adding the extra equations $\mathcal{E}_2$ is not helpful in improving the numerical stability of the \texttt{2AC-inter} solver. Even though \texttt{2AC-inter-48} produces fewer  solutions, we prefer to perform \texttt{2AC-inter-56} for the sake of numerical accuracy in the follow-up experiments. 

In addition to efficiency and numerical stability, another important factor for a solver is the minimal number of needed feature correspondences between two views. Since the proposed solvers require only two ACs, the number of RANSAC iterations is obviously lower than PC-based methods. Thus, our solvers have an advantage in detecting the outlier and estimating the initial motion efficiently when integrating them into the RANSAC framework. See supplementary material for details. As we will see later, our minimal solvers have better overall efficiency than the comparison solvers in real-world data experiments. 

\subsection{Experiments on Synthetic Data}
The proposed inter-camera and intra-camera solvers are evaluated simultaneously with a simulated multi-camera system~\cite{guanICCV2021minimal, guan2023minimal}. The baseline length between two simulated cameras is set to 1 meter, and the movement length of the multi-camera system is set to 3 meters. The resolution of cameras is 640 $\times$ 480 pixels with a focal length of 400 pixels. The principal points are set to the image center (320, 240). We carry out a total of 1000 trials. The rotation and translation errors are assessed by the median of errors in the synthetic experiment.
\begin{figure*}[tbp]
	\begin{center}
		\includegraphics[width=0.7\linewidth]{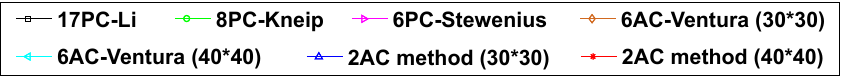}\\
		\subfigure[${\varepsilon_{\bf{R}}}$ with image noise]
		{
			\includegraphics[width=0.305\linewidth]{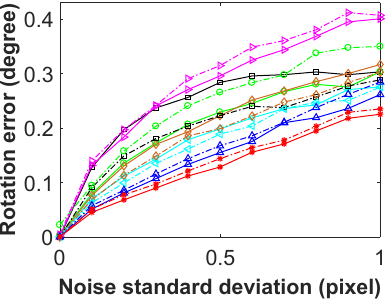}
		}
		\subfigure[${\varepsilon_{\bf{t}}}$ with image noise]
		{
			\includegraphics[width=0.305\linewidth]{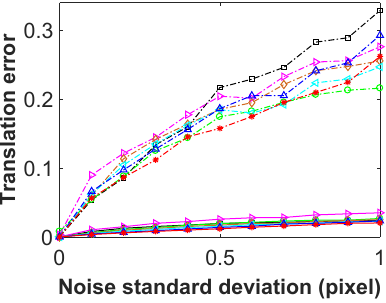}
		}
		\subfigure[$\varepsilon_{\mathbf{t},\text{dir}}$ with image noise]
		{
			\includegraphics[width=0.305\linewidth]{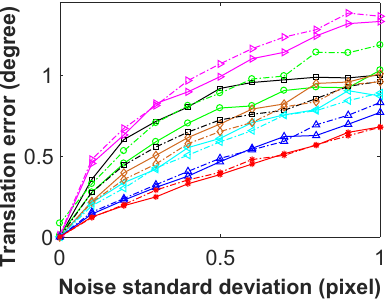}
		}
		\subfigure[${\varepsilon_{\bf{R}}}$ with image noise]
		{
			\includegraphics[width=0.305\linewidth]{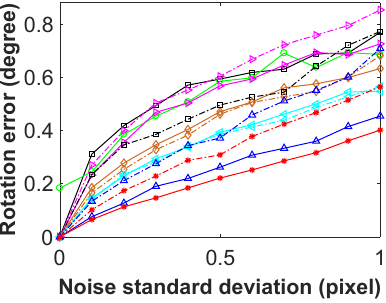}
		}
		\subfigure[${\varepsilon_{\bf{t}}}$ with image noise]
		{
			\includegraphics[width=0.305\linewidth]{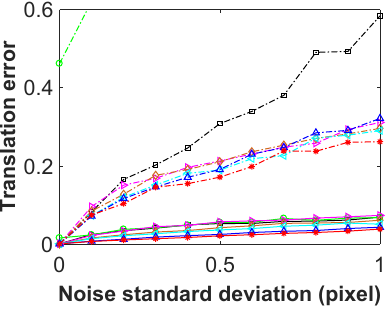}
		}
		\subfigure[$\varepsilon_{\mathbf{t},\text{dir}}$ with image noise]
		{
			\includegraphics[width=0.305\linewidth]{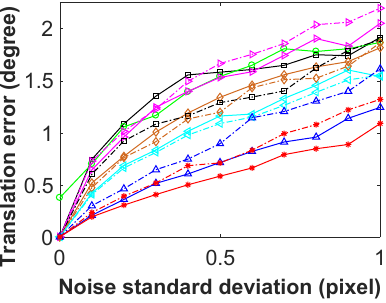}
		}
		\subfigure[${\varepsilon_{\bf{R}}}$ with image noise]
		{
			\includegraphics[width=0.305\linewidth]{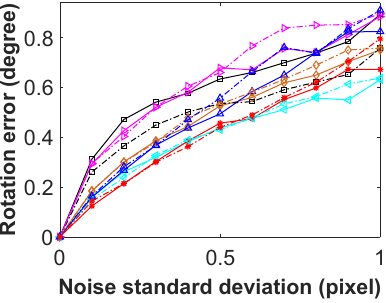}
		}
		\subfigure[${\varepsilon_{\bf{t}}}$ with image noise]
		{
			\includegraphics[width=0.305\linewidth]{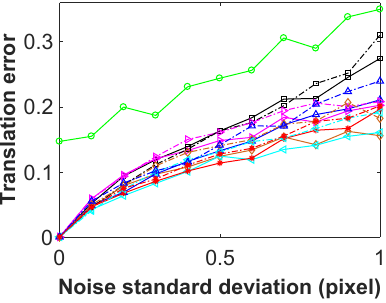}
		}
		\subfigure[$\varepsilon_{\mathbf{t},\text{dir}}$ with image noise]
		{
			\includegraphics[width=0.305\linewidth]{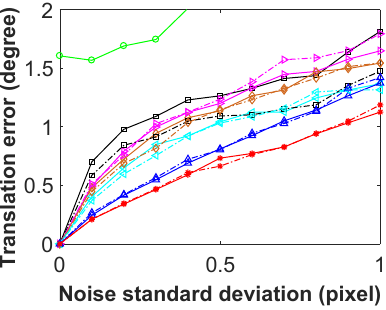}
		}
	\end{center}
	\caption{Rotation and translation error with increasing image noise. The first, second and third rows report the performance of the proposed 6DOF relative pose estimation solvers under forward, random, and sideways motions, respectively. Solid line indicates that inter-camera ACs are used, and dash-dotted line indicates that intra-camera ACs are used.}
	\label{fig:RT_generalized_Pix}
\end{figure*}

In each test, 100 ACs are generated randomly, including 50 ACs from a ground plane and 50 ACs from 50 random planes. The synthetic scene is randomly produced in a cubic region of size $[-5,5]$$\times$$[-5,5]$$\times$$[10,20]$ meters. To obtain each AC, the PC is computed by reprojecting a random 3D point from a plane into two cameras. The associated affine transformation is computed as follows: First, four additional image points are chosen as the vertices of a square in view~1, where its center is the PC of AC. The side length of the square is set to $30$ or $40$~pixels. A larger side length means the larger support regions for generating the ACs, which causes smaller noise of affine transformation. The support region is used for AC noise simulation only. Second, the ground truth homography is used to calculate the four corresponding image points in view~2. Third, Gaussian noise is added to the coordinates of four sampled image point pairs. Fourth, the noisy affine transformation is calculated from the first-order approximation of the noisy homography, which is estimated by using four noise image point pairs. This approach encompasses an indirect yet geometrically interpretable process of noising the affine transformation~\cite{barath2019homography}. Gaussian noise with a defined standard deviation is added to both the PCs and the sampled image point pairs for estimating the affine transformations. In the experiments, the required number of ACs is randomly chosen for the solvers within the RANSAC framework. For the PC-based solvers, only the PCs derived from the ACs are used.
\begin{figure}[t]
	\begin{center}
		\includegraphics[width=0.95\linewidth]{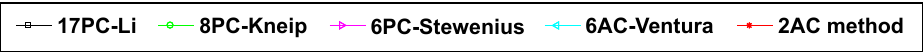}\\
		\subfigure[\scriptsize{${\varepsilon _{\bf{R}}}$ with image noise}]
		{
			\includegraphics[width=0.95\linewidth]{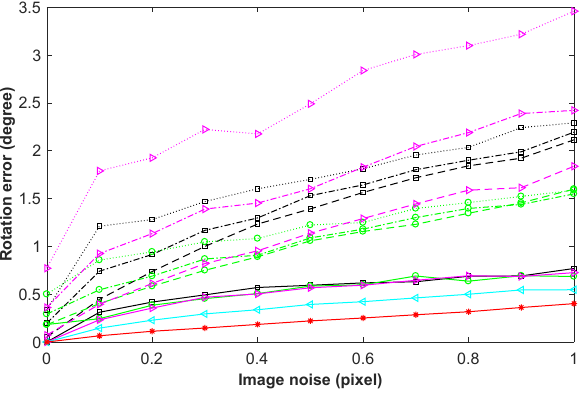}
		}\\
		\subfigure[\scriptsize{${\varepsilon _{\bf{t}}}$ with image noise}]
		{
			\includegraphics[width=0.95\linewidth]{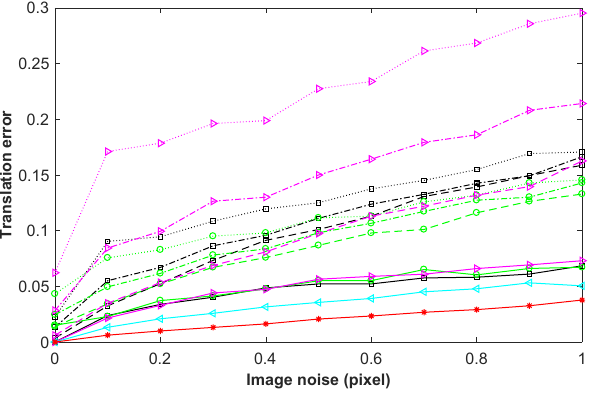}
		}\\
		\subfigure[\scriptsize{$\varepsilon_{\mathbf{t},\text{dir}}$ with image noise}]
		{
			\includegraphics[width=0.95\linewidth]{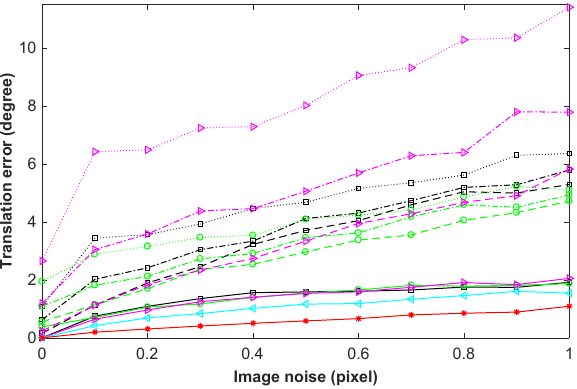}
		}
	\end{center}
	\vspace{-10pt} 
	\caption{Rotation and translation error of the PC-based solvers using ACs with increasing image noise. Solid lines indicate using the image point pairs of ACs. Dashed lines, dash-dotted lines, and dotted lines indicate using the hallucinated PCs converted from ACs, when the size of the distribution area is set to 1, 5, and 10 pixels, respectively.}		
	\vspace{-10pt} 
	\label{fig:RTRandomMotion_1ACto3PCs_stew}
\end{figure}

\subsubsection{\label{sec:imagenoise}Accuracy with Image Noise}
The performance of the proposed solvers is evaluated under varying levels of image noise. In this context, the image noise magnitude is set to Gaussian noise with a standard deviation, which ranges from 0 to 1.0 pixels. The motion directions of multi-camera systems are configured as forward, random, and sideways, respectively. Figure~\ref{fig:RT_generalized_Pix} illustrates the performance of the proposed solvers against increasing levels of image noise. Comprehensive evaluations of all the solvers are carried out on both inter-camera and intra-camera ACs. The corresponding estimates are depicted using solid and dash-dotted lines, respectively. The \texttt{2AC method} serves to signify \texttt{2AC-inter-56} when implementing inter-camera ACs, and \texttt{2AC-intra} when implementing intra-camera ACs. To ensure the presentation effect, the display range of the figures is limited, resulting in some curves with large errors being invisible or partially invisible.

We have made the following observations: (1) The solvers using inter-camera ACs generally exhibit superior performance compared to those using intra-camera ACs, particularly in the recovery of metric scale of translation. (2) The performance of AC-based solvers is influenced by the magnitude of affine transformation noise, which is determined by the support region of sampled points. Therefore, AC-based solvers exhibit better performance with larger support regions at the same magnitude of image noise. (3) At a square side length of $40$~pixels, the proposed \texttt{2AC method} yields better results in comparison to other methods using both inter-camera and intra-camera ACs. (4) The \texttt{8PC-Kneip} works well in the forward motion of multi-camera systems, but fares poorly in random and sideways motions. This may be attributed to iterative optimization that is susceptible to falling into local minima~\cite{ventura2015efficient}. (5) The linear solvers \texttt{17PC-Li} and \texttt{6AC-Ventura} with fewer calculations have less round-off error than the proposed \texttt{2AC method} in noise-free cases, see Fig.~\ref{fig:Numerical}. However, our method has better accuracy than the linear solvers with the influence of image noise. This is also consistent with the real-world data experiments.

\subsubsection{\label{sec:generatedPCs_sm}Evaluation of PC-based Solvers using ACs}
In this experiment, we test the performance of PC-based solvers for the multi-camera relative pose estimation using ACs. An AC can be converted into three PCs, which are then used as the input of the  PC-based solvers. Three generated PCs converted from an AC consist of a PC $({\mathbf{x}}_j, {\mathbf{x}}'_j)$ and two hallucinated PCs calculated by the local affine transformation $\A_j$. However, the hallucinated PCs inevitably have errors even for noise-free input. Because the local affine transformation is only valid in the distribution area, where it is infinitesimally close to the image coordinates of AC~\cite{barath2018efficient}. Following the conversion equation in~\cite{barath2020making}, we can compute three approximate PCs converted from one AC: $({\mathbf{x}}_{j}, {\mathbf{x}}_{j} + [s, \ 0]^T, {\mathbf{x}}_{j} + [0, \ s]^T)$ and $({\mathbf{x}}'_{j}, {\mathbf{x}}'_{j} + \mathbf{A}_{j}[s, \ 0]^T, {\mathbf{x}}'_{j} + \mathbf{A}_{j}[0, \ s]^T)$, where $s$ represents the size of the distribution area of the generated PCs. It can be found that the size of $s$ determines the magnitude of the conversion error of the hallucinated PCs. In this experiment, we set $s$ to 1, 5, and 10 pixels, respectively. The performance of PC-based solvers is evaluated with the different sizes of the distribution area.

Take the relative pose estimation using inter-camera ACs for an example. We carry out a total of 1000 trials in the synthetic experiment. In each test, 100 ACs are generated randomly, and the support region for generating the ACs is set to 40*40 pixels. In this experiment, the \texttt{2AC method} indicates the proposed \texttt{2AC-inter-56} solver. The required ACs are selected randomly for the AC-based solvers within the RANSAC scheme. So, 6 and 2 ACs are selected randomly for the \texttt{6AC-Ventura}~\cite{alyousefi2020multi} method and the proposed \texttt{2AC method}, respectively. For the PC-based solvers, the hallucinated PCs converted from a minimal number of ACs are used as input. Thus, 6, 3, and 2 ACs are selected randomly for the solvers \texttt{17PC-Li}~\cite{li2008linear}, \texttt{8PC-Kneip}~\cite{kneip2014efficient}, and  \texttt{6PC-Stew{\'e}nius}~\cite{henrikstewenius2005solutions}, respectively. It should be noted that we only use the hallucinated PCs converted from ACs for hypothesis generation. The corresponding inlier set of the estimated relative pose is still determined by evaluating the image point pairs of ACs. The relative pose which produces the most inliers is used to measure the error. This also allows us to select the best candidate from multiple solutions.

Figure~\ref{fig:RTRandomMotion_1ACto3PCs_stew} shows the performance of the PC-based solvers with increasing image noise under random motion. Solid lines represent the estimation results using the image point pairs of ACs. Dashed lines, dash-dotted lines, and dotted lines represent the estimation results using the hallucinated PCs converted from ACs, when the size of the distribution area is set to 1, 5, and 10 pixels, respectively. We have the following observations. (1) The PC-based solvers using the hallucinated PCs have worse performance than those using the image point pairs of the ACs. Because the conversion error is newly introduced while the hallucinated PCs are generated by the ACs. In addition, since the hallucinated PCs generated by each AC are close to each other, this may be a degenerate case for the PC-based solvers. (2) Even though the image noise is zero, the rotation and translation error of the PC-based solvers is not zero when using the hallucinated PCs. This also shows that the local affine transformation is only valid in the infinitesimal patches around the image point pairs of ACs. (3) The PC-based solvers have better performance with smaller distribution areas at the same magnitude of image noise. Because the conversion error between ACs and hallucinated PCs is determined by the size of the distribution area, and the smaller distribution area causes a smaller conversion error. (4) The proposed \texttt{2AC method} provides better estimation results than the comparative methods. Compared with the PC-based solvers, the AC-based solvers use the affine transformation constraints as expressed in Eq.~\eqref{eq:constraint_affine_single} in the paper. These affine transformation constraints describe the strictly satisfied geometric relationship between the essential matrix and the local affine transformation. The affine transformation constraints do not have any conversion error. It is an advantage compared to using the epipolar constraints of the hallucinated PCs.

\subsection{Experiments on Real Data}
The performance of the proposed solvers is evaluated on three public datasets in popular modern robot applications. Specifically, the \texttt{KITTI} dataset~\cite{geiger2013vision} and \texttt{nuScenes} dataset~\cite{Caesar_2020_CVPR} are collected on an autonomous driving environment. The \texttt{EuRoc} MAV dataset~\cite{burri2016euroc} is collected on an unmanned aerial vehicle environment. These datasets provide challenging image pairs, such as large motion and highly dynamic scenes. We compare the performance of the proposed solvers against state-of-the-art 6DOF relative pose estimation techniques. The accuracy of all the solvers is evaluated using the rotation error ${\varepsilon_{\bf{R}}}$ and the translation direction error $\varepsilon_{\mathbf{t},\text{dir}}$~\cite{alyousefi2020multi,kneip2014efficient,liu2017robust}. We tested on 30,000 image pairs in total. Our solvers focus on relative pose estimation, \emph{i.e.}, integrating the minimal solver with RANSAC. To ensure the fairness of the experiments, the PCs derived from the ACs are used in the PC-based solvers.
\begin{table}[tbp]
	\caption{Rotation and translation error for \texttt{KITTI} sequences (unit: degree).}
	\vspace{-5pt}
	\begin{center}
		\setlength{\tabcolsep}{1.0mm}{
			\scalebox{0.74}{
				\begin{tabular}{c||c|c|c|c|c}
					\hline
					\multirow{2}{*}{\footnotesize{Seq.}} &  
					\footnotesize{17PC-Li}\scriptsize{~\cite{li2008linear}} &  \footnotesize{8PC-Kneip}\scriptsize{~\cite{kneip2014efficient}} &  \footnotesize{6PC-Stew.}\scriptsize{~\cite{henrikstewenius2005solutions}}& \footnotesize{6AC-Vent.}\scriptsize{~\cite{alyousefi2020multi}}& \footnotesize{\textbf{2AC method}} \\
					\cline{2-6}
					& ${\varepsilon _{\bf{R}}}$\quad\ $\varepsilon_{\mathbf{t},\text{dir}}$      &  ${\varepsilon _{\bf{R}}}$\quad\ $\varepsilon_{\mathbf{t},\text{dir}}$      &   ${\varepsilon _{\bf{R}}}$\quad\ $\varepsilon_{\mathbf{t},\text{dir}}$     &   ${\varepsilon _{\bf{R}}}$\quad\ $\varepsilon_{\mathbf{t},\text{dir}}$  &   ${\varepsilon _{\bf{R}}}$\quad\ $\varepsilon_{\mathbf{t},\text{dir}}$\\
					\hline
					\rowcolor{gray!10}\small{00 (4541 images)}&       0.139 \ 2.412 &  0.130  \  2.400& 0.229 \ 4.007 & 0.142 \ 2.499 &\textbf{0.121} \ \textbf{2.184}     \\
					\small{01 (1101 images)}&                         0.158 \ 5.231 &  0.171  \  4.102& 0.762 \ 41.19 & 0.146 \ 3.654 &\textbf{0.136} \ \textbf{2.821}     \\
					\rowcolor{gray!10}\small{02 (4661 images)}&       0.123 \ 1.740 &  0.126  \  1.739& 0.186 \ 2.508 & 0.121 \ 1.702 &\textbf{0.120} \ \textbf{1.696}     \\
					\small{03 \ \ (801 images)} &                     0.115 \ 2.744 &  0.108  \  2.805& 0.265 \ 6.191 & 0.113 \ 2.731 &\textbf{0.097} \ \textbf{2.428}     \\
					\rowcolor{gray!10}\small{04 \ \ (271 images)} &   0.099 \ 1.560 &  0.116  \  1.746& 0.202 \ 3.619 & 0.100 \ 1.725 &\textbf{0.090} \ \textbf{1.552}     \\
					\small{05 (2761 images)}&                         0.119 \ 2.289 &  0.112  \  2.281& 0.199 \ 4.155 & 0.116 \ 2.273 &\textbf{0.103} \ \textbf{2.239}     \\
					\rowcolor{gray!10}\small{06 (1101 images)}&       0.116 \ 2.071 &  0.118  \  1.862& 0.168 \ 2.739 & 0.115 \ 1.956 &\textbf{0.106} \ \textbf{1.788}     \\
					\small{07 (1101 images)}&                         0.119 \ 3.002 &\textbf{0.112}  \  3.029& 0.245 \ 6.397 & 0.137 \ 2.892 &  0.123 \ \textbf{2.743}     \\
					\rowcolor{gray!10}\small{08 (4071 images)}&       0.116 \ 2.386 &  0.111  \  2.349& 0.196 \ 3.909 & 0.108 \ 2.344 &\textbf{0.089} \ \textbf{2.235}     \\
					\small{09 (1591 images)}&                         0.133 \ 1.977 &  0.125  \  1.806& 0.179 \ 2.592 & 0.124 \ 1.876 &\textbf{0.116} \ \textbf{1.644}     \\
					\rowcolor{gray!10}\small{10 (1201 images)}&       0.127 \ 1.889 &\textbf{ 0.115}  \  1.893& 0.201 \ 2.781 & 0.203 \ 2.057 &0.184  \ \textbf{1.687}     \\
					\hline					
		\end{tabular}}}
	\end{center}
	\label{tab:RTErrror_kitti_generalized}
\end{table}

\begin{table}[tbp]
	\caption{Runtime of RANSAC averaged over \texttt{KITTI} sequences (unit:~$s$).}
	\vspace{-5pt}
	\begin{center}
		\setlength{\tabcolsep}{1.0mm}{
			\scalebox{0.67}{
				\begin{tabular}{c||c|c|c|c|c}
					\hline
					\small{Methods} &  \small{17PC-Li}\scriptsize{~\cite{li2008linear}} &  \small{8PC-Kneip}\scriptsize{~\cite{kneip2014efficient}} &  \small{6PC-Stew.}\scriptsize{~\cite{henrikstewenius2005solutions}}& \small{6AC-Vent.}\scriptsize{~\cite{alyousefi2020multi}}& \small{\textbf{2AC method}} \\
					\hline
					\small{Mean time }& 52.82 & 10.36 & 79.76& 6.83& 4.87\\
					\hline
					\small{Standard deviation}& 2.62 & 1.59 & 4.52& 0.61& 0.35\\
					\hline
		\end{tabular}}}
	\end{center}
	\label{RANSACTime_generalized}
\end{table}

\begin{figure*}[t]
	\begin{center}
		\subfigure[\texttt{8PC-Kneip}]
		{
			\includegraphics[width=0.38\linewidth]{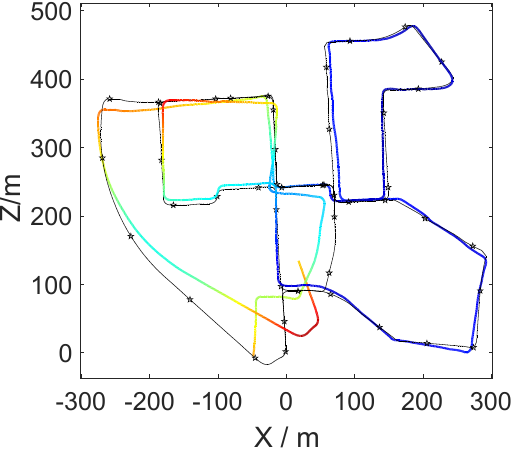}
		}
		\hspace{0.18in}
		\subfigure[\texttt{2AC method}]
		{
			\includegraphics[width=0.462\linewidth]{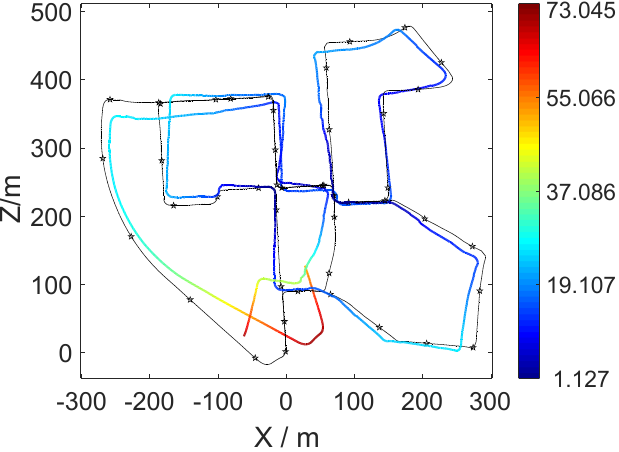}
		}
	\end{center}
	\caption{Estimated trajectories without any post-refinement. The relative pose measurements between consecutive frames are directly concatenated. The trajectories estimated by \texttt{8PC-Kneip}~\cite{kneip2014efficient} and \texttt{2AC method} are represented by the colorful curves. The ground truth trajectory is represented by the black curves with stars. Best viewed in color.}
	\label{fig:trajectory}
\end{figure*}

\subsubsection{Experiments on KITTI Dataset}
All the solvers are evaluated on \texttt{KITTI} dataset~\cite{geiger2013vision}, which is collected outdoors using forward-facing stereo cameras mounted on autonomous vehicles. We consider it as a general multi-camera system by disregarding the overlap in their fields of view. The \texttt{2AC-intra} solver is tested on all the available 11 sequences, which consist of 23000 image pairs in total. The ground truth is directly given by the output of the GPS/IMU localization unit~\cite{geiger2013vision}. For consecutive views in each camera, the ASIFT~\cite{morel2009asift} is used to establish the ACs. There are also strategies to speed up the extraction of ACs, such as MSER~\cite{matas2004robust}, GPU acceleration, or approximating ACs from SIFT features~\cite{guanICCV2021minimal}. For the PC-based solvers, only the PCs derived from the ACs are used. In order to address outlier matches, all solvers are integrated into a RANSAC framework. We implement a RANSAC-like procedure to select the correct solution from multiple solutions. This involves counting their inliers, with the solution presenting the most inliers being chosen.

Table~\ref{tab:RTErrror_kitti_generalized} illustrates the rotation and translation errors of the proposed \texttt{2AC method} for \texttt{KITTI} sequences, where the performance is evaluated through the median error. The experimental results demonstrate that the overall performance of the \texttt{2AC method} surpasses that of comparative methods in nearly all cases. In addition, to evaluate computational efficiency advantages, Table~\ref{RANSACTime_generalized} illustrates the RANSAC runtime averaged across all \texttt{KITTI} sequences for each of the solvers. Reported runtimes denote the relative pose estimation via RANSAC combined with a minimal solver, comprising mainly hypothesis generation and the best candidate selection from multiple solutions by counting the number of inliers. Despite some solvers having faster computation times than the proposed \texttt{2AC method} shown in Table~\ref{tab:SolverTime_generalized}, our method exhibits superior overall efficiency to all comparative methods when integrated into the RANSAC framework. The detailed analysis is presented in the supplementary material.

To provide a visual representation of the comparative results, we display the estimated trajectory for \texttt{KITTI} sequence 00. Figure~\ref{fig:trajectory} shows both the proposed \texttt{2AC method}'s estimated trajectory and the trajectory of the best-performing comparison method, \texttt{8PC-Kneip}~\cite{kneip2014efficient}, as shown in Table~\ref{tab:RTErrror_kitti_generalized}. It should be noted that the frame-to-frame relative pose estimation results are directly concatenated without any post-refinement. Both estimated trajectories are aligned with the ground truth, and their trajectories on the X-Z plane are illustrated in Fig.~\ref{fig:trajectory}. It is important to mention that our \texttt{2AC method} has a smaller Y-axis error than the \texttt{8PC-Kneip} method. Additionally, the color along the estimated trajectory signifies the absolute trajectory error (ATE)~\cite{sturm2012benchmark}, which highlights that the proposed \texttt{2AC method} has a smaller ATE than the \texttt{8PC-Kneip} method.

\subsubsection{Experiments on nuScenes Dataset}
The performance of the solvers is also tested on the \texttt{nuScenes} dataset~\cite{Caesar_2020_CVPR}. This dataset is comprised of consecutive keyframes from 6 cameras, and this multi-camera system yields a full 360 degree field of view. For evaluation purposes, all the keyframes of Part 1 are utilized, resulting in a total of 3376 image pairs. The ground truth is established using a lidar map-based localization scheme. Similar to the experiments conducted on the \texttt{KITTI} dataset, the ASIFT detector is employed to establish the ACs between consecutive views in the six cameras. The proposed \texttt{2AC~method} is compared to state-of-the-art methods, including \texttt{17PC-Li}~\cite{li2008linear}, \texttt{8PC-Kneip}~\cite{kneip2014efficient}, \texttt{6PC-Stew{'e}nius}~\cite{henrikstewenius2005solutions}, and \texttt{6AC-Ventura}~\cite{alyousefi2020multi}. To address outlier matches of the feature correspondences, all the solvers are integrated into the RANSAC framework.

Table~\ref{tab:RTErrror_nuScenes_generalized} illustrates the rotation and translation errors of the proposed \texttt{2AC method} for Part 1 of the \texttt{nuScenes} dataset. The estimation accuracy is evaluated through the median error. The experimental results demonstrate that the proposed \texttt{2AC method} outperforms all other methods. As evidenced by the comparison with experiments on the \texttt{KITTI} dataset, this experiment also showcases that the proposed \texttt{2AC method} can be directly applied to the relative pose estimation for systems with multiple cameras.
\begin{table}[tbp]
	\caption{Rotation and translation error for \texttt{nuScenes} sequences (unit: degree).}
	\begin{center}
		\setlength{\tabcolsep}{1.0mm}{
			\scalebox{0.74}{
				\begin{tabular}{c||c|c|c|c|c}
					\hline
					\multirow{2}{*}{\footnotesize{Part}} &  
					\footnotesize{17PC-Li}\scriptsize{~\cite{li2008linear}} &  \footnotesize{8PC-Kneip}\scriptsize{~\cite{kneip2014efficient}} &  \footnotesize{6PC-Stew.}\scriptsize{~\cite{henrikstewenius2005solutions}}& \footnotesize{6AC-Vent.}\scriptsize{~\cite{alyousefi2020multi}}& \footnotesize{\textbf{2AC method}} \\
					\cline{2-6}
					& ${\varepsilon _{\bf{R}}}$\quad\ $\varepsilon_{\mathbf{t},\text{dir}}$      &  ${\varepsilon _{\bf{R}}}$\quad\ $\varepsilon_{\mathbf{t},\text{dir}}$      &   ${\varepsilon _{\bf{R}}}$\quad\ $\varepsilon_{\mathbf{t},\text{dir}}$     &   ${\varepsilon _{\bf{R}}}$\quad\ $\varepsilon_{\mathbf{t},\text{dir}}$  &   ${\varepsilon _{\bf{R}}}$\quad\ $\varepsilon_{\mathbf{t},\text{dir}}$\\
					\hline
					\small{01 (3376 images)}&  0.161 \ 2.680 &  0.156 \ 2.407 & 0.203 \  2.764 & 0.143 \ 2.366 & \textbf{0.114} \ \textbf{2.017} \\
					\hline					
		\end{tabular}}}
	\end{center}
	\label{tab:RTErrror_nuScenes_generalized}
\end{table}

\subsubsection{\label{sec:EuRoCexperiments}Experiments on EuRoC Dataset}
To further test the performance of the solvers in an unmanned aerial vehicle environment, we utilize the \texttt{EuRoC} MAV dataset~\cite{burri2016euroc} to evaluate the accuracy of the 6DOF relative pose estimation. This dataset is captured using a stereo camera that is mounted on a micro aerial vehicle. The \texttt{2AC~method} is tested on all five available sequences, all of which are obtained from a large industrial machine hall. Each sequence offers synchronized stereo images, as well as accurate position and IMU measurements. The spatio-temporally aligned ground truth data is generated by means of a nonlinear least-squares batch solution over Leica position and IMU measurements. It is worth mentioning that the unstructured and cluttered nature of the industrial environment made these sequences difficult to process. To prevent the movement of the image pairs from being too small, one out of every four consecutive images is thinned out to obtain the image pairs for relative pose estimation. Furthermore, the image pairs with insufficient motion are removed from the experiment. The ACs between consecutive views within each camera are established using the ASIFT~\cite{morel2009asift}. In total, all of the solvers were tested on approximately 3000 image pairs.

Table~\ref{tab:RTErrror_EuRoC_generalized} illustrates the rotation and translation errors of the proposed \texttt{2AC method} for \texttt{EuRoC} sequences. The experimental results demonstrate that the proposed \texttt{2AC method} outperforms state-of-the-art methods \texttt{17PC-Li}, \texttt{8PC-Kneip}, \texttt{6PC-Stew{\'e}nius} and \texttt{6AC-Ventura}. This experiment effectively showcases the applicability of our \texttt{2AC method} in the context of relative pose estimation for unmanned aerial vehicles.
\begin{table}[tbp]
	\caption{Rotation and translation error for \texttt{EuRoC} sequences (unit: degree).}
	\begin{center}
		\setlength{\tabcolsep}{1.0mm}{
			\scalebox{0.72}{
				\begin{tabular}{c||c|c|c|c|c}
					\hline
					\multirow{2}{*}{\footnotesize{Seq.}} &  
					\footnotesize{17PC-Li}\scriptsize{~\cite{li2008linear}} &  \footnotesize{8PC-Kneip}\scriptsize{~\cite{kneip2014efficient}} &  \footnotesize{6PC-Stew.}\scriptsize{~\cite{henrikstewenius2005solutions}}& \footnotesize{6AC-Vent.}\scriptsize{~\cite{alyousefi2020multi}}& \footnotesize{\textbf{2AC method}} \\
					\cline{2-6}
					& ${\varepsilon _{\bf{R}}}$\quad\ $\varepsilon_{\mathbf{t},\text{dir}}$      &  ${\varepsilon _{\bf{R}}}$\quad\ $\varepsilon_{\mathbf{t},\text{dir}}$      &   ${\varepsilon _{\bf{R}}}$\quad\ $\varepsilon_{\mathbf{t},\text{dir}}$     &   ${\varepsilon _{\bf{R}}}$\quad\ $\varepsilon_{\mathbf{t},\text{dir}}$  &   ${\varepsilon _{\bf{R}}}$\quad\ $\varepsilon_{\mathbf{t},\text{dir}}$\\
					\hline
					\rowcolor{gray!10}\small{MH01 (788 images)}& 0.113 \ 2.928 &  0.109 \ 2.865& 0.124 \  3.555 & 0.106 \  2.858 &\textbf{0.092} \ \textbf{2.519}  \\
					\small{MH02 (675 images)}                  & 0.106 \ 2.494 &  0.112 \ 2.553& 0.144 \  2.908 & 0.102 \  2.483 &\textbf{0.086} \ \textbf{2.242}  \\
					\rowcolor{gray!10}\small{MH03 (605 images)}& 0.137 \ 2.412 &  0.148 \ 2.276& 0.181 \  3.068 & 0.133 \  2.075 &\textbf{0.125} \ \textbf{1.928}  \\
					\small{MH04 (449 images)}                  & 0.154 \ 2.950 &  0.170 \ 3.127& 0.175 \  5.531 & 0.165 \  2.966 &\textbf{0.139} \ \textbf{2.609}  \\
					\rowcolor{gray!10}\small{MH05 (514 images)}& 0.167 \ 3.071 &  0.158 \ 2.753& 0.179 \  4.275 & 0.176 \  2.904 &\textbf{0.146} \ \textbf{2.714}  \\
					\hline										
		\end{tabular}}}
	\end{center}
	\label{tab:RTErrror_EuRoC_generalized}
\end{table}	

\section{\label{sec:conclusion}Conclusion}

By exploiting the geometric constraints using a special parameterization, we estimate the 6DOF relative pose of a multi-camera system using a minimal number of two ACs. The extra implicit constraints about the affine transformation constraints are found and proved. We also develop the minimal solvers for 5DOF relative pose estimation of multi-camera systems with a known relative rotation angle. The proposed minimal solvers are designed to accommodate two common types of ACs across two different views, \emph{i.e.}, inter-camera and intra-camera. Moreover, three degenerate cases are proved. The framework for generating the minimal solvers is unified and versatile, and can be extended to solve various relative pose estimation problems. Compared with existing minimal solvers, our solvers require fewer feature correspondences and are not restricted to special cases of multi-camera motion. Experimental results using synthetic data and three real-world image datasets demonstrate that the proposed solvers can be used effectively for ego-motion estimation, surpassing state-of-the-art methods in both accuracy and efficiency.

\ifCLASSOPTIONcompsoc
 \section*{Acknowledgments}
\else
\fi
This work was supported in part by the Hunan Provincial Natural Science Foundation for Excellent Young Scholars under Grant 2023JJ20045 and in part by the Science Foundation under Grant KY0505072204 and Grant GJSD22006.

\ifCLASSOPTIONcaptionsoff
  \newpage
\fi



\bibliographystyle{IEEEtran}
%
{
	\bibliography{Reference}
}






\clearpage
\Large
\begin{center}
	{\bf Supplementary Material }
\end{center}
\normalsize

%


\appendices

\section{\label{sec:6DOFmotion_sm}6DOF Relative Pose Estimation for Multi-Camera Systems}
In this section, we first show different equation combinations for the 6DOF relative pose estimation solvers. Then, we analyze the efficiency of the proposed solvers in a RANSAC framework.   

\subsection{Different Equation Combinations for polynomial system solving}
In subsection~\ref{sec:Solve6DOF} of the paper, we use all the equations $\mathcal{E}_{1}$ and $\mathcal{E}_2$ to construct polynomial systems and find solvers. It is possible to construct solvers using a subset of these equations.
Specifically, denote $\mathcal{E}_{1,1}$ as 
{\begin{gather}
		\det(\mathbf{N}(q_x, q_y, q_z)) = 0, \ \ 
		{\mathbf{N}\in\{3\times 3 \text{ submatrices of } {\mathbf{F}_j}\} },
\end{gather}}
and  $\mathcal{E}_{1,2}$ as 
{\begin{gather}
		\det(\mathbf{N}(q_x, q_y, q_z)) = 0, \ \ 
		{\mathbf{N}\in\{3\times 3 \text{ submatrices of } {\mathbf{F}_{j'}}\} }.
\end{gather}}

We can see that $\mathcal{E}_1 = \mathcal{E}_{1,1} \cup \mathcal{E}_{1,2}$.
Using different combinations of $\mathcal{E}_{1,1}$, $\mathcal{E}_{1,2}$, and $\mathcal{E}_2$, we have the following results for polynomial system solving. The dimension, degree, and number of solutions are shown in Table~\ref{tab:complete_solution_combination}. When the dimension of the corresponding polynomial idea is zero, it means the number of solutions is finite. Otherwise, a positive dimension of the corresponding polynomial idea indicates infinite solutions.
\begin{table}[htbp]
	\centering
	\caption{Different equation combinations for 6DOF relative pose estimation solvers. \texttt{dimension} indicates the dimension of the corresponding polynomial ideal. \texttt{degree} indicates the degree of the algebraic variety. \texttt{\#sol} indicates the number of solutions. \texttt{$1$-dim} indicates one dimensional families of extraneous roots.}
	\begin{center}
		\setlength{\tabcolsep}{2.0mm}{
			\scalebox{0.88}{
				\begin{tabular}{c||c|c|c|c|c|c} 
					\hline
					\multirow{3}{*}{\centering Equation} & \multicolumn{3}{c|}{Inter-camera} &  \multicolumn{3}{c}{Intra-camera}  \\ 
					\cline{2-7} 
					&   dimension   & degree  & \#sol  &   dimension    &   degree & \#sol  \\ 
					\hline
					{$\mathcal{E}_{1,1}$} & 1 & $2$ & 1-dim & 1 &  $3$ & 1-dim \\ \hline
					{$\mathcal{E}_{1,2}$} & 1 & $2$ & 1-dim & 1 & $3$ & 1-dim \\  \hline
					{$\mathcal{E}_2$} & 1 & $16$ & 1-dim & 1 & $16$ & 1-dim \\ \hline
					{$\mathcal{E}_1$} & 0 & $56$ & $56$ & 1 &  $1$  & 1-dim\\ \hline	{$\mathcal{E}_{1,1}$+$\mathcal{E}_2$} & 0 & $56$ & $56$ & 0 &  $56$ & $56$ \\ \hline
					{$\mathcal{E}_{1,2}$+$\mathcal{E}_2$} & 0 & $56$ & $56$ & 0 & $56$ & $56$ \\ \hline
					{$\mathcal{E}_1$+$\mathcal{E}_2$} & 0 & $48$ & $48$ & 0 & $48$ & $48$ \\ \hline
		\end{tabular}}}
	\end{center}
	\label{tab:complete_solution_combination}
	\vspace{-10pt}
\end{table}

\subsection{\label{sec:efficiencycomparison_sm}Efficiency Comparison in a RANSAC Framework}
\begin{figure}[tbp]
	\begin{center}
		\includegraphics[width=0.85\linewidth]{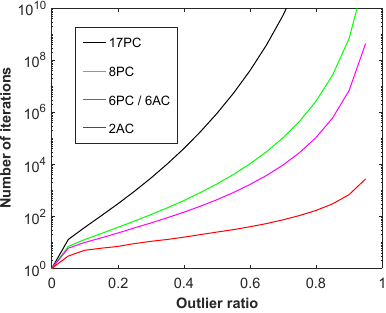}
	\end{center}
	\vspace{-10pt}
	\caption{RANSAC iteration number with respect to outlier ratio for success probability $99.9\%$. The number of RANSAC iterations increases exponentially with respect to the minimal number of feature correspondences.}
	\label{fig:RANSACIteration}
\end{figure}  
For the 6DOF relative pose estimation of multi-camera systems, we have evaluated the efficiency comparison and numerical stability of all the solvers in the paper. In addition to efficiency and numerical stability, another important factor for a solver is the minimal number of needed feature correspondences between two views. Because the minimal solvers are typically employed inside a RANSAC framework, and the computational complexity of the RANSAC estimator increases exponentially with respect to the number of feature correspondences needed. The number of iterations $N$ required in RANSAC can be given by $N=\log(1-p)/\log(1-(1-\epsilon)^s)$, where $s$ is the minimal number of feature correspondences needed for the solver, $\epsilon$ is the outlier ratio, and $p$ is the success probability that all the selected feature correspondences are inlier. For a probability of success $p=99.9\%$, the number of required RANSAC iterations with respect to the outlier ratio is shown in Fig.~\ref{fig:RANSACIteration}. 

It can be seen that the number of iterations $N$ increases exponentially with respect to the minimal number of feature correspondences $s$. For example, given the outlier ratio $\epsilon$ = $50\%$, when the solvers need $17$, $8$, $6$ and $2$ feature correspondences, the number of required RANSAC iterations is $905410$, $1765$, $439$ and $25$, respectively. Since the proposed solvers require only two ACs, the number of RANSAC iterations is obviously lower than both the PC-based methods and the AC-based linear method. Thus, our solvers have an advantage in detecting the outlier and estimating the initial motion efficiently when integrating them into the RANSAC framework. As shown in the paper, the proposed solvers have better overall efficiency than the comparative solvers in the experiments on real data.
\begin{figure*}[tbp]
	\begin{center}
		\includegraphics[width=0.75\linewidth]{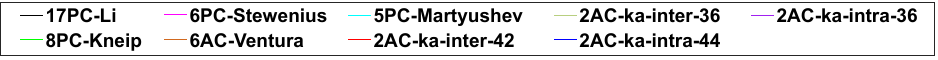}\\
		\subfigure[$\varepsilon_{\R}$]
		{
			\includegraphics[width=0.3\linewidth]{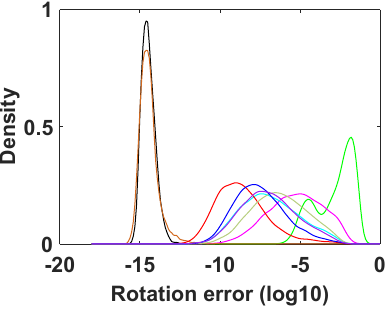}
		} 
		\subfigure[$\varepsilon_{\mathbf{t}}$]
		{
			\includegraphics[width=0.3\linewidth]{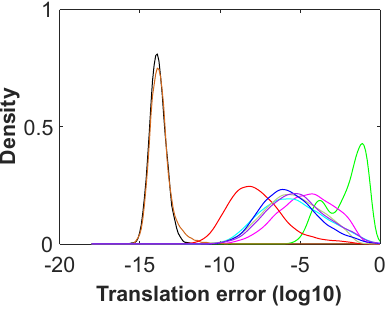}
		}
		\subfigure[$\varepsilon_{\mathbf{t},\text{dir}}$]   
		{
			\includegraphics[width=0.3\linewidth]{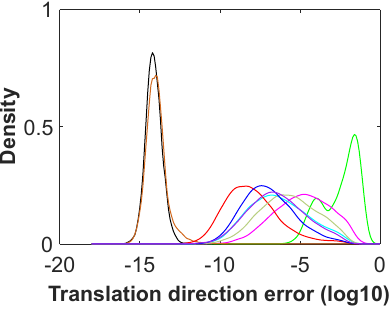}
		}
	\end{center}
	\vspace{-10pt}
	\caption{Probability density functions over 5DOF relative pose estimation errors on noise-free observations for multi-camera systems. The horizontal axis represents the $\log_{10}$ errors, and the vertical axis represents the density.}
	\label{fig:Numerical_2AC_ka}
\end{figure*} 

\begin{figure*}[tbp]
	\begin{center}
		\includegraphics[width=0.7\linewidth]{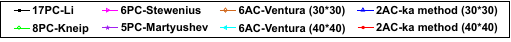}\\
		\vspace{-5pt} 
		\subfigure[${\varepsilon_{\bf{R}}}$ with image noise]
		{
			\includegraphics[width=0.303\linewidth]{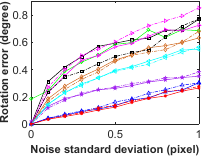}
		}
		\subfigure[${\varepsilon_{\bf{t}}}$ with image noise]
		{
			\includegraphics[width=0.303\linewidth]{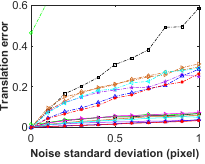}
		}
		\subfigure[$\varepsilon_{\mathbf{t},\text{dir}}$ with image noise]
		{
			\includegraphics[width=0.303\linewidth]{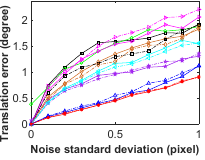}
		}
		\subfigure[${\varepsilon_{\bf{R}}}$ with rotation angle noise]
		{
			\includegraphics[width=0.303\linewidth]{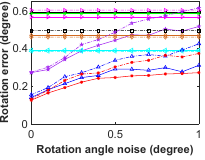}
		}
		\subfigure[${\varepsilon_{\bf{t}}}$ with rotation angle noise]
		{
			\includegraphics[width=0.303\linewidth]{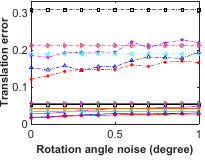}
		}
		\subfigure[$\varepsilon_{\mathbf{t},\text{dir}}$ with rotation angle noise]
		{
			\includegraphics[width=0.303\linewidth]{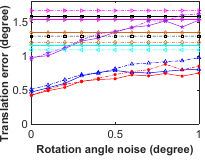}
		}
	\end{center}
	\vspace{-10pt}
	\caption{Rotation and translation error of the proposed 5DOF solvers under random motion. (a)$\sim$(c): vary image noise. (d)$\sim$(f): vary relative rotation angle noise and fix the standard deviation of image noise at 0.5 pixel. Solid line indicates using inter-camera ACs, and dash-dotted line indicates using intra-camera ACs.}
	\vspace{-5pt}
	\label{fig:RT_generalized_Pix_5DOF_Random}
\end{figure*} 

\section{\label{sec:5DOFmotion_Experiments_sm}5DOF Relative Pose Estimation for Multi-Camera Systems}
In this section, the performance of the 5DOF solvers proposed in Section~\ref{sec:5DOFmotion} is validated using both synthetic and real-world data. The configurations of the experiments are set as the same as used in Section~\ref{sec:experiments} in the paper. For 5DOF relative pose estimation with known relative rotation angle, the proposed solvers are referred to as \texttt{2AC-ka method}, specifically \texttt{2AC-ka-inter} for inter-camera ACs and \texttt{2AC-ka-intra} for intra-camera ACs. To further distinguish two solvers for inter-camera ACs, \texttt{2AC-inter-42} and \texttt{2AC-inter-36} are used to refer to the solvers resulting from $\mathcal{E}_1$ and $\mathcal{E}_1+\mathcal{E}_2$, respectively. Meanwhile, to further distinguish two solvers for intra-camera ACs, \texttt{2AC-intra-44} and \texttt{2AC-intra-36} are used to refer to the solvers resulting from $\mathcal{E}_1$ and $\mathcal{E}_1+\mathcal{E}_2$, respectively. 
\begin{figure*}[tbp]
	\begin{center}
		\includegraphics[width=0.7\linewidth]{figure/Legend_2AC5DOF_stew_ka.pdf}\\
		\subfigure[${\varepsilon_{\bf{R}}}$ with image noise]
		{
			\includegraphics[width=0.303\linewidth]{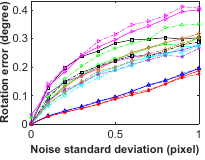}
		}
		\subfigure[${\varepsilon_{\bf{t}}}$ with image noise]
		{
			\includegraphics[width=0.303\linewidth]{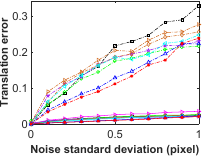}
		}
		\subfigure[$\varepsilon_{\mathbf{t},\text{dir}}$ with image noise]
		{
			\includegraphics[width=0.303\linewidth]{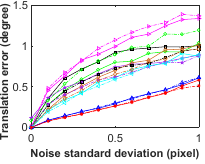}
		}
		\subfigure[${\varepsilon_{\bf{R}}}$ with rotation angle noise]
		{
			\includegraphics[width=0.303\linewidth]{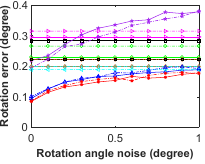}
		}
		\subfigure[${\varepsilon_{\bf{t}}}$ with rotation angle noise]
		{
			\includegraphics[width=0.303\linewidth]{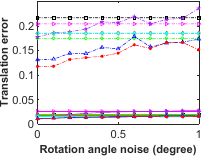}
		}
		\subfigure[$\varepsilon_{\mathbf{t},\text{dir}}$ with rotation angle noise]
		{
			\includegraphics[width=0.303\linewidth]{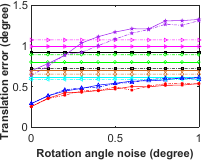}
		}
	\end{center}
	\caption{Rotation and translation error of the proposed 5DOF solvers under forward motion. (a)$\sim$(c): vary image noise. (d)$\sim$(f): vary relative rotation angle noise and fix the standard deviation of image noise at 1.0 pixel. Solid line indicates using inter-camera ACs, and dash-dotted line indicates using intra-camera ACs.}
	\label{fig:RT_generalized_Pix_5DOF_Forward}
\end{figure*} 

\begin{figure*}[tbp]
	\begin{center}
		\includegraphics[width=0.7\linewidth]{figure/Legend_2AC5DOF_stew_ka.pdf}\\
		\subfigure[${\varepsilon_{\bf{R}}}$ with image noise]
		{
			\includegraphics[width=0.303\linewidth]{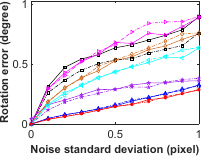}
		}
		\subfigure[${\varepsilon_{\bf{t}}}$ with image noise]
		{
			\includegraphics[width=0.303\linewidth]{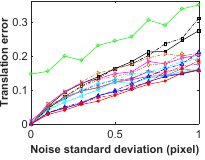}
		}
		\subfigure[$\varepsilon_{\mathbf{t},\text{dir}}$ with image noise]
		{
			\includegraphics[width=0.303\linewidth]{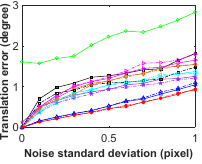}
		}
		\subfigure[${\varepsilon_{\bf{R}}}$ with rotation angle noise]
		{
			\includegraphics[width=0.303\linewidth]{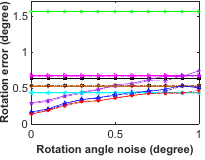}
		}
		\subfigure[${\varepsilon_{\bf{t}}}$ with rotation angle noise]
		{
			\includegraphics[width=0.303\linewidth]{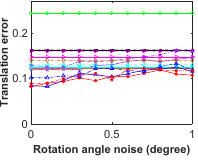}
		}
		\subfigure[$\varepsilon_{\mathbf{t},\text{dir}}$ with rotation angle noise]
		{
			\includegraphics[width=0.303\linewidth]{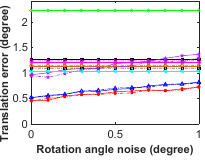}
		}
	\end{center}
	\caption{Rotation and translation error of the proposed 5DOF solvers under sideways motion. (a)$\sim$(c): 
		vary image noise. (d)$\sim$(f): vary relative rotation angle noise and fix the standard deviation of image noise at 1.0 pixel. Solid line indicates using inter-camera ACs, and dash-dotted line indicates using intra-camera ACs.}
	\label{fig:RT_generalized_Pix_5DOF_Sideways}
\end{figure*} 

The \texttt{2AC-ka} solvers are compared with state-of-the-art methods including \texttt{17PC-Li}~\cite{li2008linear}, \texttt{8PC-Kneip}~\cite{kneip2014efficient}, \texttt{6PC-Stew{\'e}nius}~\cite{henrikstewenius2005solutions}, \texttt{6AC-Ventura}~\cite{alyousefi2020multi} and \texttt{5PC-Martyushev}~\cite{martyushev2020efficient}. Note that the \texttt{5PC-Martyushev} solver also uses the known relative rotation angle as a prior. The proposed solvers are not compared with the methods which estimate the relative pose with different motion prior, such as planar motion~\cite{guanICCV2021minimal}, known gravity direction~\cite{sweeney2014solving,guanICCV2021minimal} and first-order approximation assumption~\cite{ventura2015efficient,Guan_ICRA2021}. All the solvers are integrated into RANSAC in order to remove outlier matches. The relative pose which produces the most inliers is used to measure the relative pose error. This also allows us to select the best candidate from multiple solutions by counting their inliers. 

\subsection{Numerical Accuracy}
Figure~\ref{fig:Numerical_2AC_ka} reports the numerical stability comparison of the 5DOF solvers in noise-free cases. We repeat the procedure 10,000 times and plot the empirical probability density functions as the function of the $\log_{10}$ estimated errors. Numerical stability represents the round-off error of solvers in noise-free cases. The solvers \texttt{17PC-Li}~\cite{li2008linear} and \texttt{6AC-Ventura}~\cite{alyousefi2020multi} have the best numerical stability, the linear solvers with smaller computation burden have less round-off error. Since the \texttt{8PC-Kneip}~\cite{kneip2014efficient} uses the iterative optimization, it is susceptible to falling into local minima. The \texttt{2AC-ka-inter-42} and \texttt{2AC-ka-intra-44} have better numerical stability than the \texttt{2AC-ka-inter-36} and \texttt{2AC-ka-intra-36}, respectively. It is shown that adding the extra equations $\mathcal{E}_2$ is not helpful in terms of numerical stability. This phenomenon has also been observed in previous literature~\cite{byrod2009fast}, which shows that the number of basis affect the numerical stability. Even though \texttt{2AC-ka-inter-36} and \texttt{2AC-ka-intra-36} produce fewer solutions, we prefer to perform \texttt{2AC-ka-inter-42} and \texttt{2AC-ka-intra-44} for the sake of numerical accuracy in the follow-up experiments.  

\subsection{Experiments on Synthetic Data}
In the experiments, the required ACs are selected randomly for the solvers within the RANSAC scheme. For the PC-based solvers, only the PC of the AC is used. The motion directions of the multi-camera system are set to random, forward, and sideways motions, respectively.

\subsubsection{Accuracy with Image Noise}
In this scenario, the magnitude of image noise is set to Gaussian noise with a standard deviation ranging from 0 to 1 pixel. Figure~\ref{fig:RT_generalized_Pix_5DOF_Random}(a)$\sim$(c) show the performance of the proposed 5DOF solvers with increasing image noise under random motion. All the solvers are evaluated on both inter-camera ACs and intra-camera ACs. The corresponding estimation results are represented by solid lines and dash-dotted lines, respectively. The \texttt{2AC-ka} indicates \texttt{2AC-ka-inter-42} when using inter-camera ACs, and indicates \texttt{2AC-ka-intra-44} when using intra-camera ACs. The performance of AC-based methods is influenced by the noise magnitude of affine transformation, which is determined by the support region of sampled points. The display range is limited so that some curves with large errors are invisible or partially invisible. The proposed 5DOF solvers using inter-camera ACs generally have better performance than intra-camera ACs, especially in recovering the metric scale of translation. The \texttt{8PC-Kneip} performs poorly in sideways motion, and the probable reason may be the iterative optimization which is susceptible to falling into local minima~\cite{ventura2015efficient}. Our \texttt{2AC-ka} provides better results than the comparative methods with both inter-camera ACs and intra-camera ACs, even though the side length of the square is $20$~pixels. 

Figure~\ref{fig:RT_generalized_Pix_5DOF_Forward}(a)$\sim$(c) show the performance of the proposed 5DOF solvers with increasing image noise under forward motion. Figure~\ref{fig:RT_generalized_Pix_5DOF_Sideways}(a)$\sim$(c) show the performance of the proposed 5DOF solvers with increasing image noise under sideways motion. Solid lines and dash-dotted lines represent the evaluation results with inter-camera ACs and intra-camera ACs, respectively. The \texttt{2AC-ka} indicates \texttt{2AC-ka-inter-42} when using inter-camera ACs, and indicates \texttt{2AC-ka-intra-44} when using intra-camera ACs. The \texttt{2AC-ka} provides better results than the comparative methods with both inter-camera ACs and intra-camera ACs, even though the side length of the square is $20$~pixels. 
	
\subsubsection{Accuracy with Rotation Angle Noise}
\begin{figure}[htbp]
	\begin{center}
		\includegraphics[width=1.0\linewidth]{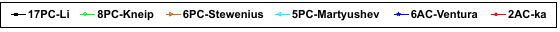}\\
		\subfigure[\scriptsize{${\varepsilon _{\bf{R}}}$ with image noise}]
		{
			\includegraphics[width=0.96\linewidth]{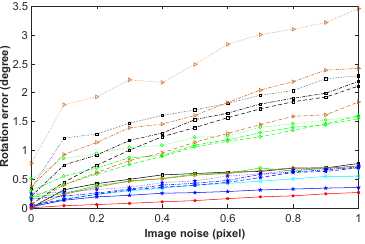}
		}\\ 
		\subfigure[\scriptsize{${\varepsilon _{\bf{t}}}$ with image noise}]
		{
			\includegraphics[width=0.965\linewidth]{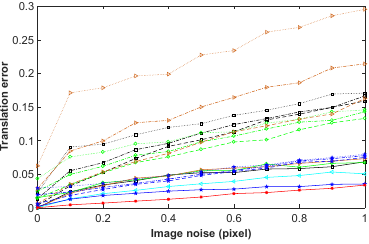}
		}\\
		\subfigure[\scriptsize{$\varepsilon_{\mathbf{t},\text{dir}}$ with image noise}]
		{
			\includegraphics[width=0.96\linewidth]{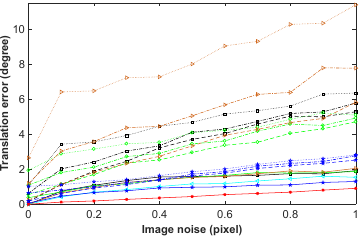}
		}
	\end{center}
	\vspace{-10pt} 
	\caption{Rotation and translation error of the PC-based solvers using ACs with varying image noise under random motion. Solid line indicates using image point pairs of ACs. Dashed line, dash-dotted line, and dotted line indicate using the hallucinated PCs, which are generated with different distribution areas $s$ = 1, 5, and 10 pixels, respectively.}		
	\vspace{-10pt} 
	\label{fig:RTRandomMotion_1ACto3PCs_5DOF}
\end{figure}

In this scenario, the image noise is set to 0.5 pixel standard deviation. The magnitude of rotation angle noise is set to Gaussian noise with a standard deviation ranging from $0^\circ$ to $1^\circ$. Figure~\ref{fig:RT_generalized_Pix_5DOF_Random}(d)$\sim$(f) show the performance of the proposed 5DOF solvers with increasing rotation angle noise under random motion. The methods \texttt{17PC-Li}~\cite{li2008linear}, \texttt{8PC-Kneip}~\cite{kneip2014efficient}, \texttt{6PC-Stew{\'e}nius}~\cite{henrikstewenius2005solutions}, and \texttt{6AC-Ventura}~\cite{alyousefi2020multi} are not influenced by the rotation angle noise, because their calculation does not utilize the known relative rotation angle as prior and estimate 5DOF relative pose of two views directly. In comparison with using intra-camera ACs, the solvers using inter-camera ACs perform better, especially in estimating the metric scale of translation. Due to the large error, the results of \texttt{8PC-Kneip} using intra-camera ACs are invisible in this figure. It is seen that \texttt{2AC-ka} is robust to the increasing rotation angle noise and provides better results than the comparative methods. 

Figure~\ref{fig:RT_generalized_Pix_5DOF_Forward}(d)$\sim$(f) show the performance of the proposed 5DOF solvers with increasing rotation angle noise under forward motion. 	Figure~\ref{fig:RT_generalized_Pix_5DOF_Sideways}(d)$\sim$(f) show the performance of the proposed 5DOF solvers with increasing rotation angle noise under sideways motion. The solvers using inter-camera ACs have better performance than the solvers using intra-camera ACs, especially in recovering the metric scale of translation. It can be seen that the proposed \texttt{2AC-ka} is robust to the increasing rotation angle noise and provides better results than the comparative methods.

\subsubsection{{Evaluation of PC-based Solvers using ACs}}
Similar to Section~\ref{sec:generatedPCs_sm} in the paper, we test the performance of PC-based solvers for the 5DOF relative pose estimation using ACs. Take 5DOF relative pose estimation using inter-camera ACs for an example. The synthetic data is generated by following the configuration in Section~\ref{sec:generatedPCs_sm}. A total of 1000 trials are carried out in the synthetic experiment. The rotation and translation errors are assessed by the median of errors. In each test, 100 inter-camera ACs are generated randomly with $40\times40$ support region. In the RANSAC loop, 6 and 2 ACs are selected randomly for the \texttt{6AC-Ventura}~\cite{alyousefi2020multi} method and the proposed \texttt{2AC-ka-inter-42} method, respectively. The hallucinated PCs converted from a minimal number of ACs are used as input for the PC-based solvers. Thus, 6, 3, and 2 ACs are selected randomly for \texttt{17PC-Li}~\cite{li2008linear}, \texttt{8PC-Kneip}~\cite{kneip2014efficient}, and the solvers \texttt{6PC-Stew{\'e}nius}~\cite{henrikstewenius2005solutions} and \texttt{5PC-Martyushev}~\cite{martyushev2020efficient}, respectively. Note that the hallucinated PCs converted from ACs are only used for hypothesis generation, and the inlier set is found by evaluating the image point pairs of ACs. The solution which produces the highest number of inliers is chosen. The other configurations of this synthetic experiment are set as the same as used in Figure~\ref{fig:RT_generalized_Pix_5DOF_Random}(a)$\sim$(c) in the paper. 
	
Figure~\ref{fig:RTRandomMotion_1ACto3PCs_5DOF} shows the performance of the PC-based solvers with increasing image noise under random motion. The ego-motion estimation results using the image point pairs of ACs are represented by solid lines. The ego-motion estimation results using the hallucinated PCs generated with different distribution areas are represented by dashed line ($s$ = 1 pixel), dash-dotted line ($s$ = 5 pixels), and dotted line ($s$ = 10 pixels), respectively. We have the following observations. (1) The PC-based solvers using the hallucinated PCs perform worse than those using the image point pairs of AC. Because the conversion error between each AC and three PCs is newly introduced. It can be seen that the relative pose estimation error of PC-based solvers using the hallucinated PCs is not zero, even for image noise-free input. Moreover, the hallucinated PCs generated by each AC are near each other, which may be a degenerate case for the PC-based solvers. (2) The performance of PC-based solvers is influenced by the different distribution areas of hallucinated PCs. Since a smaller distribution area causes smaller conversion errors between ACs and PCs, the PC-based solvers have better performance with a smaller distribution area. (3) The performance of the proposed \texttt{2AC-ka} is best. Because the AC-based solvers use the relationship between local affine transformations and epipolar lines (Eq.~\eqref{eq:constraint_affine_single} in the paper). This is a strictly satisfied constraint and does not result in any error for noise-free input.

\subsection{\label{sec:realexperiments}Experiments on Real Data}
The performance of the proposed 5DOF solvers is evaluated on three public datasets in popular modern applications of mobile devices. Specifically, two datasets \texttt{KITTI} ~\cite{geiger2013vision} and \texttt{nuScenes}~\cite{Caesar_2020_CVPR} are collected using outdoor autonomous vehicles. One dataset \texttt{EuRoc}~\cite{burri2016euroc} is collected using an unmanned aerial vehicle. There are a total of 30,000 image pairs in three datasets. A lot of challenging image pairs including large motion and highly dynamic scenes are provided in these datasets. The proposed solvers are compared against state-of-the-art relative pose estimation methods.
	
\subsubsection{Experiments on KITTI Dataset}
Our solvers are evaluated on \texttt{KITTI} dataset~\cite{geiger2013vision} collected on outdoor autonomous vehicles with forward facing cameras. We treat it as a general multi-camera system by ignoring the overlap in their fields of view. The \texttt{2AC-ka-intra} solver is tested on all the available 11 sequences, which consists of 23000 image pairs in total. The ground truth is directly given by the output of the GPS/IMU localization unit~\cite{geiger2013vision}. The relative rotation angle is obtained from the fused GPS/IMU pose~\cite{Li2020Relative}. For consecutive views in each camera, the ASIFT~\cite{morel2009asift} is used to establish the ACs. Note that only the PCs of the ACs are used for the PC-based solvers. To deal with outlier matches, all the solvers are integrated into the RANSAC. To select the right solution from multiple solutions, we counted their inliers in a RANSAC-like procedure and the solution with the most inliers is chosen. 
	
Table~\ref{tab:RTErrror_kitti_generalized_5DOF} shows the rotation and translation error of the proposed 5DOF solvers for \texttt{KITTI} sequences. The median error is used to evaluate the performance. The overall performance of the \texttt{2AC-ka} is best. Table~\ref{RANSACTime_generalized_5DOF} shows the corresponding RANSAC runtime averaged over all the KITTI sequences for the proposed 5DOF solvers. The reported runtimes include the relative pose estimation by RANSAC combined with a minimal solver. Since the proposed solvers require fewer correspondences, they have the advantage of computational efficiency when integrating them into RANSAC. Even though the proposed solvers are not the fastest, they need fewer iterations in the RANSAC framework. The overall efficiency is better than or comparable with the state-of-the-art methods. Based on the experiments on the \texttt{KITTI} dataset, we demonstrate that our solvers can be used efficiently for ego-motion estimation and outperforms the state-of-the-art methods in both accuracy and efficiency.
\begin{table}[tbp]
	\caption{Rotation and translation error of the proposed 5DOF solvers on \texttt{KITTI} sequences (unit: degree).}
		\vspace{-5pt}
		\begin{center}
			\setlength{\tabcolsep}{1.0mm}{
				\scalebox{0.76}{
					\begin{tabular}{c|c|c|c|c|c|c}
						\hline
						\multirow{2}{*}{\footnotesize{Seq.}} &  
						\footnotesize{17PC-Li}\scriptsize{~\cite{li2008linear}} &  \footnotesize{8PC-Kneip}\scriptsize{~\cite{kneip2014efficient}} &  \footnotesize{6PC-Stew.}\scriptsize{~\cite{henrikstewenius2005solutions}}  &  \footnotesize{6AC-Vent.}\scriptsize{~\cite{alyousefi2020multi}} 	&  \footnotesize{5PC-Mart.}\scriptsize{~\cite{martyushev2020efficient}}& \footnotesize{\textbf{2AC-ka}} \\
						\cline{2-7}
						& ${\varepsilon _{\bf{R}}}$\quad\ $\varepsilon_{\mathbf{t},\text{dir}}$    &  ${\varepsilon _{\bf{R}}}$\quad\ $\varepsilon_{\mathbf{t},\text{dir}}$  &   ${\varepsilon _{\bf{R}}}$\quad\ $\varepsilon_{\mathbf{t},\text{dir}}$ 	& ${\varepsilon _{\bf{R}}}$\quad\ $\varepsilon_{\mathbf{t},\text{dir}}$    &   ${\varepsilon _{\bf{R}}}$\quad\ $\varepsilon_{\mathbf{t},\text{dir}}$  &   ${\varepsilon _{\bf{R}}}$\quad\ $\varepsilon_{\mathbf{t},\text{dir}}$\\
						\hline
						00&                   0.139 \ 2.412 &  0.130  \  2.400& 0.229 \ 4.007 & 0.142 \ 2.499 & 0.123  \  2.485 &\textbf{0.117} \ \textbf{1.908}     \\
						01&                   0.158 \ 5.231 &  0.171  \  4.102& 0.762 \ 41.19 & 0.146 \ 3.654 & 0.096  \  4.539 &\textbf{0.077} \ \textbf{1.649}     \\
						02&                   0.123 \ 1.740 &  0.126  \  1.739& 0.186 \ 2.508 & 0.121 \ 1.702 & 0.117  \  1.832 &\textbf{0.109} \ \textbf{1.623}     \\
						03&                   0.115 \ 2.744 &  0.108  \  2.805& 0.265 \ 6.191 & 0.113 \ 2.731 & 0.094  \  3.468 &\textbf{0.081} \ \textbf{2.379}     \\
						04&                   0.099 \ 1.560 &  0.116  \  1.746& 0.202 \ 3.619 & 0.100 \ 1.725 & 0.077  \  1.717 &\textbf{0.063} \ \textbf{1.465}     \\
						05&                   0.119 \ 2.289 &  0.112  \  2.281& 0.199 \ 4.155 & 0.116 \ 2.273 & 0.106  \  2.374 &\textbf{0.079} \ \textbf{1.681}     \\
						06&                   0.116 \ 2.071 &  0.118  \  1.862& 0.168 \ 2.739 & 0.115 \ 1.956 & 0.105  \  1.839 &\textbf{0.084} \ \textbf{1.503}     \\
						07&                   0.119 \ 3.002 &  0.112  \  3.029& 0.245 \ 6.397 & 0.137 \ 2.892 & 0.108  \  2.784 &\textbf{0.098} \ \textbf{1.997}     \\
						08&                   0.116 \ 2.386 &  0.111  \  2.349& 0.196 \ 3.909 & 0.108 \ 2.344 & 0.109  \  2.505 &\textbf{0.086} \ \textbf{2.172}     \\
						09&                   0.133 \ 1.977 &  0.125  \  1.806& 0.179 \ 2.592 & 0.124 \ 1.876 & 0.112  \  1.953 &\textbf{0.108} \ \textbf{1.456}     \\
						10&                   0.127 \ 1.889 &  0.115  \  1.893& 0.201 \ 2.781 & 0.203 \ 2.057 & \textbf{0.106}  \  1.826 &0.170 \ \textbf{1.601}      \\
						\hline					
			\end{tabular}}}
		\end{center}
		\vspace{-1pt}
		\label{tab:RTErrror_kitti_generalized_5DOF}
\end{table}

\begin{table}[tbp]
		\caption{Runtime of RANSAC averaged over \texttt{KITTI} sequences combined with the proposed 5DOF solvers (unit:~$s$).}
		\vspace{-5pt}
		\begin{center}
			\setlength{\tabcolsep}{1.0mm}{
				\scalebox{0.65}{
					\begin{tabular}{c|c|c|c|c|c|c}
						\hline
						\small{Methods} &  \small{17PC-Li}\scriptsize{~\cite{li2008linear}} &  \small{8PC-Kneip}\scriptsize{~\cite{kneip2014efficient}} &  \small{6PC-St.}\scriptsize{~\cite{henrikstewenius2005solutions}}&
						\small{6AC-Ven.}\scriptsize{~\cite{alyousefi2020multi}}&
						\small{5PC-Mar.}\scriptsize{~\cite{martyushev2020efficient}}& \small{\textbf{2AC-ka}} \\
						\hline
						\small{Mean time }& 52.82 & 10.36 & 79.76& 6.83& 5.97& \textbf{3.89}\\
						\hline
						\small{Standard deviation}& 2.62 & 1.59 & 4.52& 0.61& 0.48& \textbf{0.27}\\
						\hline
			\end{tabular}}}
		\end{center}
		\label{RANSACTime_generalized_5DOF}
\end{table}
		
\subsubsection{Experiments on nuScenes Dataset}
We also test the performance of the proposed solvers on \texttt{nuScenes} dataset~\cite{geiger2013vision}. The multi-camera system is composed of six perspective cameras and provides full 360 degree field of view. The proposed solvers are applied to every consecutive image pair of Part 1. In total, 3376 consecutive image pairs are used from this dataset. The ground truth pose is obtained from a lidar map-based localization scheme. We establish the ACs between consecutive views in each camera by using the ASIFT detector~\cite{morel2009asift}. The \texttt{2AC-ka} solver is compared against the state-of-the-art methods including \texttt{17PC-Li}~\cite{li2008linear}, \texttt{8PC-Kneip}~\cite{kneip2014efficient}, \texttt{6PC-Stew{\'e}nius}~\cite{henrikstewenius2005solutions}, \texttt{6AC-Ventura}~\cite{alyousefi2020multi} and \texttt{5PC-Martyushev}~\cite{martyushev2020efficient}. In order to remove outlier matches of the feature correspondences, all the solvers are used within RANSAC.
	
Table~\ref{tab:RTErrror_nuScenes_generalized_5DOF} shows the rotation and translation error of the proposed \texttt{2AC-ka} solver for the \texttt{nuScenes} dataset. The median of errors is used to assess the rotation and translation errors. It can be seen that the proposed solvers outperform comparable state-of-the-art methods in the ego-motion of multi-camera systems.   
\begin{table}[tbp]
		\caption{Rotation and translation error of the proposed 5DOF solvers on \texttt{nuScenes} sequences (unit: degree).}
		\vspace{-5pt}
		\begin{center}
			\setlength{\tabcolsep}{1.0mm}{
				\scalebox{0.76}{
					\begin{tabular}{c|c|c|c|c|c|c}
						\hline
						\multirow{2}{*}{\footnotesize{Seq.}} &  
						\footnotesize{17PC-Li}\scriptsize{~\cite{li2008linear}} &  \footnotesize{8PC-Kneip}\scriptsize{~\cite{kneip2014efficient}} &  \footnotesize{6PC-Stew.}\scriptsize{~\cite{henrikstewenius2005solutions}}  &  \footnotesize{6AC-Vent.}\scriptsize{~\cite{alyousefi2020multi}} 	&  \footnotesize{5PC-Mart.}\scriptsize{~\cite{martyushev2020efficient}}& \footnotesize{\textbf{2AC-ka}} \\
						\cline{2-7}
						& ${\varepsilon _{\bf{R}}}$\quad\ $\varepsilon_{\mathbf{t},\text{dir}}$    &  ${\varepsilon _{\bf{R}}}$\quad\ $\varepsilon_{\mathbf{t},\text{dir}}$  &   ${\varepsilon _{\bf{R}}}$\quad\ $\varepsilon_{\mathbf{t},\text{dir}}$ 	& ${\varepsilon _{\bf{R}}}$\quad\ $\varepsilon_{\mathbf{t},\text{dir}}$    &   ${\varepsilon _{\bf{R}}}$\quad\ $\varepsilon_{\mathbf{t},\text{dir}}$  &   ${\varepsilon _{\bf{R}}}$\quad\ $\varepsilon_{\mathbf{t},\text{dir}}$\\
						\hline
						01&  0.161 \ 2.680 &  0.156 \ 2.407 & 0.203 \  2.764 & 0.143 \ 2.366 & 0.103 \ 2.119 & \textbf{0.095} \ \textbf{1.872} \\
						\hline					
			\end{tabular}}}
		\end{center}
		\label{tab:RTErrror_nuScenes_generalized_5DOF}
\end{table}
	
\subsubsection{Experiments on EuRoC Dataset}
To further illustrate the usefulness of the proposed solvers in the unmanned aerial vehicle environment, we test them on the \texttt{EuRoC} MAV dataset~\cite{burri2016euroc}. The consecutive image pairs are collected using a stereo camera mounted on a micro aerial vehicle. The proposed solvers are evaluated on all the available five sequences, which are recorded in a large industrial machine hall. The environment is unstructured and cluttered, which renders the sequences challenging to process. The ground truth pose is obtained from the nonlinear least-squares batch solution over the IMU measurements and the accurate position provided by a Leica laser tracker. The relative rotation angle is derived from the IMU measurements~\cite{martyushev2020efficient}. To prevent the movement of image pairs from being too small, we reduce the number of image pairs by taking one out of every four consecutive images in the sequences. Moreover, we also crop the image pairs with insufficient motion. In total, 3000 image pairs are used from this dataset. We also establish the ACs between consecutive views in each camera using the ASIFT detector~\cite{morel2009asift}. All the solvers are used within RANSAC to remove outlier matches. 
\begin{table}[tbp]
		\caption{Rotation and translation error of the proposed 5DOF solvers on \texttt{EuRoC} sequences (unit: degree).}
		\vspace{-5pt}
		\begin{center}
			\setlength{\tabcolsep}{1.0mm}{
				\scalebox{0.76}{
					\begin{tabular}{c|c|c|c|c|c|c}
						\hline
						\multirow{2}{*}{\footnotesize{Seq.}} &  
						\footnotesize{17PC-Li}\scriptsize{~\cite{li2008linear}} &  \footnotesize{8PC-Kneip}\scriptsize{~\cite{kneip2014efficient}} &  \footnotesize{6PC-Stew.}\scriptsize{~\cite{henrikstewenius2005solutions}}  &  \footnotesize{6AC-Vent.}\scriptsize{~\cite{alyousefi2020multi}} 	&  \footnotesize{5PC-Mart.}\scriptsize{~\cite{martyushev2020efficient}}& \footnotesize{\textbf{2AC-ka}} \\
						\cline{2-7}
						& ${\varepsilon _{\bf{R}}}$\quad\ $\varepsilon_{\mathbf{t},\text{dir}}$    &  ${\varepsilon _{\bf{R}}}$\quad\ $\varepsilon_{\mathbf{t},\text{dir}}$  &   ${\varepsilon _{\bf{R}}}$\quad\ $\varepsilon_{\mathbf{t},\text{dir}}$ 	& ${\varepsilon _{\bf{R}}}$\quad\ $\varepsilon_{\mathbf{t},\text{dir}}$    &   ${\varepsilon _{\bf{R}}}$\quad\ $\varepsilon_{\mathbf{t},\text{dir}}$  &   ${\varepsilon _{\bf{R}}}$\quad\ $\varepsilon_{\mathbf{t},\text{dir}}$\\
						\hline
						{01}& 0.113 \ 2.928 &  0.109 \ 2.865& 0.124 \  3.555 & 0.106 \  2.858 &{0.085} \ {2.286}  & \textbf{0.071} \ \textbf{2.132}  \\
						{02}& 0.106 \ 2.494 &  0.112 \ 2.553& 0.144 \  2.908 & 0.102 \  2.483 &{0.093} \ {2.335}  & \textbf{0.065} \ \textbf{1.916}  \\
						{03}& 0.137 \ 2.412 &  0.148 \ 2.276& 0.181 \  3.068 & 0.133 \  2.075 &{0.109} \ {2.136}  & \textbf{0.103} \ \textbf{1.824}  \\
						{04}& 0.154 \ 2.950 &  0.170 \ 3.127& 0.175 \  5.531 & 0.165 \  2.966 &{0.114} \ {2.476}  & \textbf{0.105} \ \textbf{2.330}  \\
						{05}& 0.167 \ 3.071 &  0.158 \ 2.753& 0.179 \  4.275 & 0.176 \  2.904 &{0.096} \ {2.518}  & \textbf{0.087} \ \textbf{2.216}  \\
						\hline							
			\end{tabular}}}
		\end{center}
		\label{tab:RTErrror_EuRoC_generalized_5DOF}
\end{table}
	
Table~\ref{tab:RTErrror_EuRoC_generalized_5DOF} shows the rotation and translation error of the proposed \texttt{2AC-ka} solver for the \texttt{EuRoC} dataset. The median of errors is used to assess the rotation and translation errors. These experiment results show that the proposed solvers provide better results than the comparable state-of-the-art solvers. Due to the benefits of computational efficiency, the proposed \texttt{2AC-ka} solver is suitable for performing outlier removal and initial motion estimation in the unmanned aerial vehicle environment. 


%


\end{document}